\newtheorem{lemma}{Lemma}
\newtheorem{theorem}{Theorem}
\newtheorem{assumption}{Assumption}
\newtheorem{definition}{Definition}
\newtheorem{remark}{Remark}
\tikzstyle{var}=[circle,draw=black,fill=white,thin,minimum size=18pt,inner sep=0pt]
\tikzstyle{ivar}=[circle,draw=red,fill=white,thin,minimum size=18pt,inner sep=0pt]
\tikzstyle{avar}=[circle,draw=blue,fill=white,thin,minimum size=18pt,inner sep=0pt]
\tikzstyle{varh}=[circle,draw=gray,fill=white,thin,minimum size=18pt,inner sep=0pt,dashed]
\tikzstyle{arr}=[->,>=stealth',draw=black,thick]
\tikzstyle{arrh}=[->,>=stealth',draw=gray,thick,dashed]
\tikzstyle{biarr}=[<->,>=stealth',draw=black,fill=black,thick]
\tikzstyle{biarrh}=[<->,>=stealth',draw=gray,fill=gray,thick]
\begin{document}

%

%

\twocolumn[

\aistatstitle{High-Dimensional Feature Selection for Sample Efficient Treatment Effect Estimation}

\aistatsauthor{ Kristjan Greenewald \And Dmitriy Katz-Rogozhnikov \And  Karthik Shanmugam }

\aistatsaddress{ MIT-IBM Watson AI Lab \And  IBM Research \And IBM Research } ]

\begin{abstract}
The estimation of causal treatment effects from observational data is a fundamental problem in causal inference. 
To avoid bias, the effect estimator must control for all confounders. Hence practitioners often collect data for as many covariates as possible to raise the chances of including the relevant confounders. While this addresses the bias, this has the side effect of significantly increasing the number of data samples required to accurately estimate the effect due to the increased dimensionality. In this work, we consider the setting where out of a large number of covariates $X$ that satisfy strong ignorability, an unknown sparse subset $S$ is sufficient to include to achieve zero bias, i.e. $c$-equivalent to $X$. We propose a common objective function involving outcomes across treatment cohorts with nonconvex joint sparsity regularization that is guaranteed to recover $S$ with high probability under a linear outcome model for $Y$ and subgaussian covariates for each of the treatment cohort. This improves the effect estimation sample complexity so that it scales with the cardinality of the sparse subset $S$ and $\log |X|$, as opposed to the cardinality of the full set $X$. We validate our approach with experiments on treatment effect estimation.
\end{abstract}

\section{Introduction}
Consider the problem of estimating the treatment effect of $T$ on a univariate outcome $Y$ in the presence of (possibly confounding) covariates $X$, where the treatment variable can take $q$ possible treatment configurations. We assume only observational data is available. The causal graph for this setup is shown in Figure \ref{fig:graphX}.


One of the central issues of causal effect estimation is identifying features that are confounders and controlling for them. Let $Y_t(X)$ denote the counterfactual outcome associated when treatment $t$ is applied as an intervention given $X$.  We consider the simpler case when the observed set of covariates $X$ is admissible or eligible to be used for adjustment. In other words, for any treatment $t$, $Y_t \perp T |X$, i.e. the counterfactual outcome associated with any treatment $t$ is independent of the treatment choice in the observational data given $X$. We are interested in the problem of estimating the average treatment effect between the pair of treatments given by $\mathbb{E}_X \left[ Y_t - Y_{t'} \right]$. This is denoted by $\mathrm{ATE}$. If $X$ is admissible this can be estimated from observational data. Inverse propensity weighing, standardization and doubly robust estimation are standard techniques used \cite{guo2020survey,imbens2009recent}.

If $X \in \mathbb{R}^p$ is high dimensional (large $p$), however, the number of samples required to estimate the treatment effects  accurately becomes too large to be practical in many applications.  In  practice, features in $X$ are designed to include as many factors as possible to capture all relevant confounders that are needed to satisfy the admissibility criterion. \cite{shpitser2012identification} showed that if we know the semi-Markovian causal model behind the observational data, then one can algorithmically identify if a given subset of $X$ is admissible or not (even if $X$ is not admissible). 

In our work, we focus on the case when $X$ is admissible but no detailed causal model is available. We study sufficient conditions for identifying if a subset $S \subset X$ is admissible given that $X$ is known to be admissible. A subset $S_1$ is \textit{c-equivalent} to another subset $S_2$ if $S_2$ being admissible implies $S_1$ being admissible and vice versa. In other words, both subsets can be used for adjustment and will yield the same $\mathrm{ATE}$ estimate. We rely on sufficient conditions for $c$-equivalence in \cite{pearl2009causality} as our main technical tool.

We consider a coarser causal model given in Figure \ref{fig:graph}, where $X$ has been decomposed into the sets $X_1$, $X_2$ (confounders), and $X_3$ (predictors) based on their connections to $Y$ and $T$. Applying sufficient conditions for $c$-equivalence, we show that it is sufficient to use either of two possible sets to form unbiased treatment effect estimates: $S = X_2 \cup X_3$ and $X_1 \cup X_2$.\footnote{Nodes in $X$ that do not have edges to either $T$ or $Y$ should not be included in either of the two sets. We omit these from the figure for simplicity.}

Prior work on sparse feature selection for treatment effect estimation has focused on the case where $X_1 \cup X_2$ is sparse \cite{shortreed2017outcome, cheng2020sufficient}, i.e. the number of confounding variables plus the number of variables biasing the treatment is small. In this work, we complete the picture by considering the companion setting where instead $S = X_2 \cup X_3$ is sparse, i.e. the number of confounding variables plus the number of predictors is small. In practice, we suggest running both our method and a method that identifies $X_1 \cup X_2$ and choosing the one that yields the lowest variance unbiased estimate. 
An added benefit of using $S$ over $X_1 \cup X_2$ is that $S$ includes the set of predictors, which serve to reduce the variance of the treatment effect estimate \cite{shortreed2017outcome}.


\textbf{Contributions:}
Given $X$ is admissible and given $q$ treatment cohorts, under a linear outcome model for $Y$ given $T$ and $X$, we show that maximizing least squares likelihood with a joint sparse non convex regularization recovers the subset $S$ of interest and the number of samples required is $O(kq \log p)$ and the error in the support recovery scales as $O ( \sqrt{\frac{\log p}{n}})$ where $|X|=p$ and $n$ is the number of samples. We demonstrate the effectiveness of our subset identification step in synthetic experiments as well in combination with doubly robust $\mathrm{ATE}$ estimation procedures on real datasets.




\begin{figure}[tb]
\begin{center}
\resizebox{.8\columnwidth}{!}{%
\begin{tikzpicture}
      \node[var] (Y) at (4,0) {$Y$};
      \node[var] (T) at (0,0) {$T$};
      \node[var] (X) at (2,1) {$X$};
      \draw[arr] (T) edge (Y);
      \draw[arr] (X) edge (T);
      \draw[arr] (X) edge (Y);
\end{tikzpicture}
}
\end{center}
\caption{Causal graph. $T$ is a discrete treatment, taking up to $q$ values, and $Y$ is a scalar outcome. $X$ is an observed set of $p$ covariates. }
\label{fig:graphX}
\end{figure}
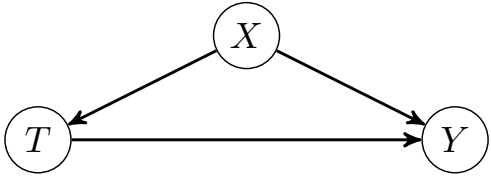

\textbf{Prior Work:}
\cite{guo2020survey,imbens2009recent} provide surveys of  methods that address causal effect estimation with observational data both from machine learning and econometrics perspectives. These surveys review classic approaches to $\mathrm{ATE}$ estimation including propensity weighing, doubly robust estimation and matching techniques. We only briefly review a small subset of these works in what follows. 

Perhaps the most relevant to our work is \cite{shortreed2017outcome} which also considered variable selection for causal inference in the regime stated in Figure \ref{fig:graphX}. In contrast to our approach that regresses $Y$ on $S$ given a fixed $T$, they use an outcome adaptive lasso\footnote{Lasso weighted by the unregularized coefficients.} sparse regression on the logistic transformation of $P(T=1|X)$, in order to find the $X_1 \cup X_2$ set. This choice limits the approach to binary treatments, and the associated theory is limited to asymptotic consistency, with no indication of sample complexity relative to the sparsity or dimensionality. \cite{cheng2020sufficient}, instead of finding a sparse subset $X_1 \cup X_2$, transform the covariates into a low dimensional space that satisfies conditional independence criteria.

A growing body of recent work has been applying machine learning to ITE estimation.
\cite{athey2015machine,kuenzel2019heterogeneous} introduce meta frameworks for applying supervised learning for ITE estimation. \cite{hill2011bayesian} applies Bayesian techniques to ITE estimation. Inspired by the rise of deep learning,
\cite{kallus2018deepmatch} used adversarial training to find covariate representations that match across treatment cohorts. Various recent works apply domain adaptation techniques to learn deep representations that match the treatment cohorts
\cite{yao2018representation,shalit2017estimating,yoon2018ganite}. \cite{louizos2017causal} uses variational autoencoders to find noisy proxies for latent confounders, and uses the result for ITE estimation. \cite{wager2018estimation} leverage latest advances in learning using forests for ITE estimation problems. When ITE/ATE is not identifiable from data, interval estimates on treatment effects have been obtained in \cite{kallus2019interval,yadlowsky2018bounds}.


 \textbf{Notation:} For a matrix $A \in \mathbb{R}^{p \times q}$, we define $A_{i:}$ to be the $i$th row of $A$ and $A_{:j}$ to be the $j$th column of $A$.
 We also define the norm $\|A\|_{a,b}$ for $a,b \in \mathbb{R}^+ \cup \infty$ as $\|A\|_{a,b}^a = \sum_{i =1}^p \|A_{i:}\|_b^a$. We denote $|||A|||_a$ as the $a$th order matrix norm, and $\|A\|_F = \|A\|_{2,2}$ as the Frobenius norm.




\begin{figure}[tb]
\begin{center}
\resizebox{\columnwidth}{!}{%
\begin{tikzpicture}
      \node[var] (Y) at (5.5,0) {$Y$};
      \node[var] (T) at (0,0) {$T$};
      \node[var] (S) at (2.75,1) {$X_2$};
      \node[var] (Xp) at (1.25,1) {$X_1$};
      \node[var] (X3) at (4.25,1) {$X_3$};
      \draw[arr] (X3) edge (Y);
      \draw[arr] (X3) edge (S);
      \draw[arr] (S) edge (X3);
      \draw[arr] (Xp) edge[bend left] (X3);
      \draw[arr] (X3) edge[bend right] (Xp);
      \draw[arr] (T) edge (Y);
      \draw[arr] (S) edge (T);
      \draw[arr] (S) edge (Y);
      \draw[arr] (Xp) edge (T);
      \draw[arr] (Xp) edge (S);
      \draw[arr] (S) edge (Xp);
      \draw[dashed] (.5,.5) rectangle (5,2);
      \node[text width = .7cm] at (.9,1.75){$X$};
      \draw[dashed] (2,.6) rectangle (4.9,1.9);
      \node[text width = .7cm] at (4.85,1.65){$S$};
\end{tikzpicture}
}
\end{center}
\caption{Partition of $X$ by connections to $T$ and $Y$. $X$ is composed of $X_1$ (arrows into $T$ not $Y$), $X_2$ (arrows into both $T$ and $S$, i.e. confounders), and $X_3$ (arrows into $Y$ not $T$, i.e. predictors). The identities of these sets are not known a priori and must be discovered from data. $S$ is composed of $X_2$ and $X_3$. }
\label{fig:graph}
\end{figure}
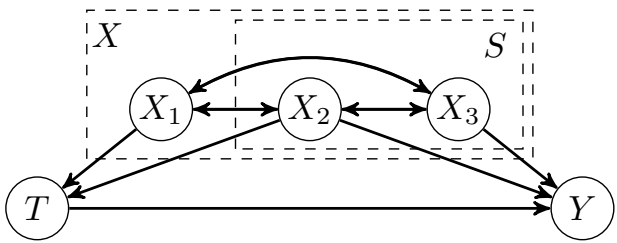


\section{Treatment Effects and Admissible Sets}

In the case of binary treatments, the average treatment effect is given by
\[
E[Y|\mathrm{do}(T = 1)] - E[Y|\mathrm{do}(T=0)],
\]
and the individual treatment effect by
\[
E[Y|X,\mathrm{do}(T = 1)] - E[Y|X,\mathrm{do}(T=0)].
\]
For higher cardinality $T$, similar pairwise differences involving the $E[Y|X, \mathrm{do}(T=t)]$ are in order.

Since we only have observational data, we make use the following property from \cite{pearl2009causality}:
\begin{definition}[Admissibility]
A set $X$ is called admissible if 
\begin{equation}\label{eq:admis}
p(y|\mathrm{do}(T=t)) = \int p(y|t, x) p(x) dx,
\end{equation}
i.e. we can compute the causal effect using observational probabilities controlling for $X$.
\end{definition}
$X$ will be admissible if there are no hidden confounders. Note that the dimensionality of the integral \eqref{eq:admis} is large since $X$ is high dimensional. Can we simplify this expression to involve only the sparse subset $S$? We make use of the following definition \cite{pearl2009causality}.
\begin{definition}[$c$-equivalence]\label{def:cequiv}
Two subsets $S_1$ and $S_2$ are $c$-equivalent if 
\[
\int p(y|t,s_1)p(s_1)d s_1 =  \int p(y|t,s_2)p(s_2)d s_2,
\]
i.e. the causal effect distributions controlling for $S_1$ and $S_2$ are equal.
\end{definition}
Definition \ref{def:cequiv} implies that if $S_2$ is $c$-equivalent to $S_1$ and $S_1$ is admissible, then so is $S_2$.

We now show $S$ and $X_1 \cup X_2$ are $c$-equivalent to $X$. 
\begin{lemma}\label{thm:admiss}
Given the causal graph in Figure \ref{fig:graph}, both the subset $S$ and the subset $X_1 \cup X_2$ are $c$-equivalent to the set $X$, hence either subset is sufficient as control to compute an unbiased estimate of the treatment effect.
\end{lemma}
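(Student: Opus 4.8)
The plan is to use the admissibility of $X$ as an anchor and show that each candidate subset produces the same standardized distribution $\int p(y\mid t,z)\,p(z)\,dz$ as $X$ does, which is exactly the equality required by Definition \ref{def:cequiv}. The two sufficient conditions for $c$-equivalence from \cite{pearl2009causality} that I would invoke are the following: if $Z' = Z \cup W$, then $Z$ and $Z'$ are $c$-equivalent whenever either (i) $W \perp Y \mid T, Z$ (the dropped block is irrelevant to the outcome given the treatment and the retained set), or (ii) $W \perp T \mid Z$ (the dropped block is irrelevant to the treatment given the retained set). The one-line justifications are: under (i) we may replace $p(y\mid t,z,w)$ by $p(y\mid t,z)$ and then marginalize $w$ out of $p(z,w)$; under (ii) the identity $p(w\mid t,z)=p(w\mid z)$ collapses $\int p(y\mid t,z,w)\,p(w\mid z)\,dw$ back to $p(y\mid t,z)$. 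Since each candidate set is obtained from $X$ by dropping exactly one block, it suffices to verify the appropriate independence for that block.

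For $S = X_2\cup X_3$ the dropped block is $W = X_1$, and I would verify condition (i), namely $X_1 \perp Y \mid T, X_2, X_3$. Reading this off Figure \ref{fig:graph}: every path from $X_1$ to $Y$ either passes through $X_2$ or $X_3$ (e.g.\ $X_1 \to X_3 \to Y$ or $X_1 \to X_2 \to Y$), which are in the conditioning set and act as non-colliders, or it runs through the chain $X_1 \to T \to Y$, which is blocked by conditioning on $T$. Hence $\{T,X_2,X_3\}$ $d$-separates $X_1$ from $Y$, condition (i) applies, and $S$ is $c$-equivalent to $X$.

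For $X_1 \cup X_2$ the dropped block is $W = X_3$, and I would verify condition (ii), namely $X_3 \perp T \mid X_1, X_2$. From the figure $X_3$ has no edge into $T$; every path from $X_3$ to $T$ must leave $X_3$ through $X_2$ or $X_1$ (e.g.\ $X_3 \to X_2 \to T$ or $X_3 \to X_1 \to T$), which are blocked by the conditioning set, while the only path touching $Y$, namely $X_3 \to Y \leftarrow T$, is blocked at the collider $Y$ since $Y$ is not conditioned on. Thus $\{X_1,X_2\}$ $d$-separates $X_3$ from $T$, condition (ii) applies, and $X_1\cup X_2$ is $c$-equivalent to $X$. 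Because $c$-equivalence is an equivalence relation (it is defined by equality of a functional), the two subsets are also $c$-equivalent to each other, and since $X$ is admissible both inherit admissibility and yield the same unbiased $\mathrm{ATE}$.

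The main obstacle I anticipate is the $d$-separation bookkeeping rather than any analytic difficulty: since $T$ is a common child of $X_1$ and $X_2$, conditioning on $T$ in case (i) opens the collider $X_1 \to T \leftarrow X_2$, so I must confirm that the induced association is re-blocked by also conditioning on $X_2$. Likewise, the mutual (bidirected) edges drawn among $X_1,X_2,X_3$ mean I should argue that their internal dependence structure is irrelevant in each case, because it is fully screened off by the retained block together with $T$; that is, any route from the dropped block to $Y$ (resp.\ to $T$) must traverse a retained node as a non-collider. Once these path checks are discharged, the two sufficient conditions close the argument.
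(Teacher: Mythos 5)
Your proposal is correct and follows essentially the same route as the paper: the paper likewise invokes Pearl's two sufficient conditions for $c$-equivalence (its condition (a) for $S$, reducing to your $X_1 \perp Y \mid T, X_2, X_3$ since $T \perp S \mid X$ is trivial for $S \subseteq X$, and its condition (b) for $X_1 \cup X_2$, reducing to your $X_3 \perp T \mid X_1, X_2$) and verifies them from Figure \ref{fig:graph}. Your version is if anything more careful, since you derive the nested-set specialization explicitly and walk through the $d$-separation (including the collider opened at $T$), where the paper simply appeals to Markov-blanket arguments.
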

This result is proved in supplement Section \ref{app:admiss}, and relies on two sufficient conditions for strong ignorability and $c$-equivalence given in Chapter 11 of \cite{pearl2009causality}.

Lemma \ref{thm:admiss} establishes that there is no additional bias resulting from controlling for either $S$ only or $X_1 \cup X_2$ only instead of the full $X$. Assuming that $X$ is an admissible set, i.e. there are no hidden confounders, there will be no bias and $S$ will also be an admissible set. The question then is which of these two sets to use as control. We suggest that when $X$ is high dimensional, in general the sparser of the two admissible subsets $S$ and $X_1 \cup X_2$ should be used (bearing in mind that all else being equal, effect estimates with $S$ will be lower variance since it includes all predictors $X_3$ of $Y$). Previous works such as \cite{shortreed2017outcome} focused on finding and controlling for $X_1 \cup X_2$, in this work we close the loop by proposing an estimator for the alternative admissible set $S$ and theoretically proving its lower sample complexity when $T$ is discrete and $Y$ continuous.

Given observational samples of $X, Y, T$, our goal is thus to find the smallest subset $S$ containing all nodes in $X$ that have an edge pointing towards $Y$ in the graph.\footnote{Note that if there are nodes in $X_3$ that do not have any direct connections to $X_1$, they are not needed for admissibility. We choose to include them in $S$ since they reduce the variance of the treatment effect estimator. } Since we have assumed that the outcome $Y$ does not have any edge pointing to $X$ or $T$, it is sufficient to use observational data to condition on $T=t$ and find the set of nodes $S_t$ in $X$ that have edges connecting to $Y$ in the undirected graph, and then take the union over $t$ as $S = \bigcup_{t=1}^q S_t$. 




\section{Oracle ATE/ITE}
In this section, we describe treatment effect estimation in the oracle setting where the support $S$ is known.
Suppose that an oracle gives us the identity of the optimal $S$ subset. By Lemma \ref{thm:admiss}, we have that
\[
P(Y| T) = \int P(Y|S,T) P(S) dS.
\]
This can be estimated directly from empirical probabilities, although with continuous $S$ the sample complexity is still significant without additional assumptions. In this work, we make use of the following assumption of linearity with respect to $S$ (to be relaxed in future work).
\begin{assumption}[Linearity]\label{ass:lin}
Assume that $Y$ follows the following generative model depending on $T$ and $S$:
\[
Y = \theta_{:t}^T S + \epsilon,
\]
where $\theta \in \mathbb{R}^{k \times q}$ is a matrix of linear coefficients and $\epsilon$ is i.i.d. noise. 
\end{assumption}


Suppose a regression estimate $\hat{\theta} \in \mathbb{R}^{k\times q}$ of the coefficient matrix $\theta$ is available.
For binary treatments, the individual treatment effect (ITE) can then be estimated by
\[
\widehat{ITE}(X) = (\hat{\theta}_{:1} - \hat{\theta}_{:0})^T S,
\]
where the $\hat{\theta}_i$ are regression coefficient estimates. 

Similarly, we can estimate the average treatment effect (ATE) as
\[
\widehat{ATE} = (\hat{\theta}_{:1} - \hat{\theta}_{:0})^T {\mu}_S,
\]
where ${\mu}_S$ is the specified mean of $S$.

We have the following lemma relating treatment effect estimation error to coefficient estimation error. The proof is immediate from norm inequalities.
\begin{lemma}[Oracle Effect Estimation Error]\label{lem:esterr}
Given Assumption \ref{ass:lin}, we have for binary treatments:
$|\widehat{ITE}(S) - {ITE(S)}| \leq \|S\|_1 \cdot\sum_{t=0}^1\|\hat{\theta}_{:t} - {\theta}_{:t}\|_\infty$, 
    $|\widehat{ATE} - {ATE}| \leq \|\mu_S\|_1 \cdot\sum_{t=0}^1\|\hat{\theta}_{:t} - {\theta}_{:t}\|_\infty$.
More generally, for $q$ treatments define $\tau(t) =E[Y|S, \mathrm{do}(T=t)] = \theta_{:t}^T S$. We have for all $t$ that
$|\hat{\tau}(t) - \tau(t)| \leq \|S\|_1 \|\hat{\theta} - \theta\|_{\infty,\infty}$.
\end{lemma}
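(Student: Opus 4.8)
The plan is to treat each of the three bounds as an instance of the same one-line argument: write the estimation error of the effect as a linear functional of the coefficient error $\hat\theta - \theta$, and control it by the $\ell_1$--$\ell_\infty$ Hölder inequality $|a^\top v| \le \|a\|_\infty \|v\|_1$.

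First I would handle the binary ITE. Under Assumption~\ref{ass:lin} we have $ITE(S) = (\theta_{:1} - \theta_{:0})^\top S$ and $\widehat{ITE}(S) = (\hat\theta_{:1} - \hat\theta_{:0})^\top S$, so their difference is $\big((\hat\theta_{:1} - \theta_{:1}) - (\hat\theta_{:0} - \theta_{:0})\big)^\top S$. Applying Hölder with the vector $S$ in the $\ell_1$ slot gives $|\widehat{ITE}(S) - ITE(S)| \le \|S\|_1 \, \|(\hat\theta_{:1}-\theta_{:1}) - (\hat\theta_{:0}-\theta_{:0})\|_\infty$, and the triangle inequality for $\|\cdot\|_\infty$ splits the last factor into $\|\hat\theta_{:1}-\theta_{:1}\|_\infty + \|\hat\theta_{:0}-\theta_{:0}\|_\infty = \sum_{t=0}^1 \|\hat\theta_{:t}-\theta_{:t}\|_\infty$, which is exactly the claimed bound. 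The ATE bound is identical word-for-word after replacing the random vector $S$ by the fixed mean $\mu_S$, since $\widehat{ATE} - ATE = \big((\hat\theta_{:1}-\theta_{:1}) - (\hat\theta_{:0}-\theta_{:0})\big)^\top \mu_S$.

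For the general $q$-treatment statement the estimand is a single linear functional rather than a difference: $\hat\tau(t) - \tau(t) = (\hat\theta_{:t} - \theta_{:t})^\top S$, so Hölder alone yields $|\hat\tau(t) - \tau(t)| \le \|S\|_1 \, \|\hat\theta_{:t} - \theta_{:t}\|_\infty$. Here the only point requiring care is the final step that relabels the column norm by the matrix max-norm: by the paper's definition $\|\hat\theta-\theta\|_{\infty,\infty} = \max_{i,j}|(\hat\theta-\theta)_{ij}|$, so $\|\hat\theta_{:t}-\theta_{:t}\|_\infty = \max_i |(\hat\theta-\theta)_{it}| \le \|\hat\theta-\theta\|_{\infty,\infty}$ for every $t$, giving $|\hat\tau(t)-\tau(t)| \le \|S\|_1 \|\hat\theta-\theta\|_{\infty,\infty}$.

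There is no real obstacle here---the result is ``immediate from norm inequalities'' as stated---so the main thing to get right is the bookkeeping: applying Hölder in the correct $\ell_1/\ell_\infty$ pairing, remembering the triangle-inequality split that produces the sum over $t \in \{0,1\}$ in the binary cases, and matching the mixed-norm convention $\|\cdot\|_{\infty,\infty}$ so that the per-column $\ell_\infty$ bound is dominated by the global max-norm.
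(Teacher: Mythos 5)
Your proof is correct and matches the paper's intended argument exactly: the paper offers no written proof beyond the remark that the result is ``immediate from norm inequalities,'' and your H\"older ($\ell_1$--$\ell_\infty$) pairing plus the triangle-inequality split over $t\in\{0,1\}$ is precisely that argument, with the final max-norm domination handled correctly under the paper's definition of $\|\cdot\|_{\infty,\infty}$.
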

Note that the error bounds in Lemma \ref{lem:esterr} grow linearly with $\|S\|_1$, which tends to grow linearly with the cardinality $|S| = k$. This confirms our motivation for finding sparse solutions to reduce sample complexity. 
In this section, we assumed that the sparse admissible set $S$ was given to us by an oracle. In the next section, we consider the real world setting where we must recover $S$ from the data itself.




\section{Jointly Sparse Variable Selection}



Our goal is to estimate the matrix of linear coefficients $\theta$ in Assumption \ref{ass:lin} using sparse regression of $Y$ versus $X$ given fixed $T$.
The classic approach to sparse regression is the lasso objective, which in our setting is
\[
\hat{\theta}_{:,j} = \arg\min_{\theta \in \mathbb{R}^p} \frac{1}{2}\theta^T \frac{X_j^T X_j}{n} \theta - \frac{y_j^T X_j}{n}\theta + \lambda \|\theta\|_1,
\]
where $X_j, y_j$ are samples from the $T = j$ conditional.\footnote{For simplicity, throughout the paper we assume $n$ samples are available from each conditional. The results can be easily adjusted to the case of imbalanced sample sets.}

Since we care about the union of the nonzero supports of the $\theta_{:j}$, it is wasteful to force an entry to zero in the $t=0$ graph if we know it is nonzero in $t=1$, etc. Hence, we instead use \emph{group sparsity}, which couples the sparsity of the $q$ vectors together. 

Traditionally, group sparsity is encouraged via the L-1,2 norm \cite{huang2010benefit,lounici2011oracle}, which is an L1 norm of the L2 norms of the rows of $\theta$. Copying the above, we can write the group-sparsity based objective as
\begin{align*}
\hat{\theta} &= \arg\min_{\theta\in\mathbb{R}^{p\times q}} \sum_{j=1}^q \left[\frac{1}{2}\theta_{:j}^T \frac{X_j^T X_j}{n} \theta_{:j} - \frac{y_j^T X_j}{n}\theta_{:j}\right] + \lambda \|\theta\|_{1,2}
\end{align*}
This can be solved iteratively as it is a convex problem, and 2-norm error bounds exist \cite{huang2010benefit,lounici2011oracle}.

Unfortunately, it is known that L1 based regression, while successful in estimating coefficients in terms of L2 norm error, does not perform well for variable selection without complex incoherence assumptions \cite{loh2017support}. 
To avoid these difficult-to-interpret assumptions, instead of L1 we will rely on the following class of nonconvex regularizers that retain the sparsity-promoting properties of the ``cusp" at $t=0$, while using a nonconvex shape to not excessively penalize large coefficients. 
\begin{definition}[$(\mu,\gamma)$-amenability]
\label{def:amen}
A regularization function $\rho_\lambda$ with parameter $\lambda$ is $\mu$-amenable for some $\mu >0$ if the following hold:
\begin{itemize}
 \vspace{-1mm}
    \item $\rho_\lambda$ is symmetric around 0 and $\rho_\lambda(0) = 0$.
    \vspace{-1mm}
    \item $\rho_\lambda$ is nondecreasing on $\mathbb{R}^+$.
     \vspace{-1mm}
    \item the function $\frac{\rho_\lambda(t)}{t}$ is nonincreasing on $\mathbb{R}^+$.
     \vspace{-1mm}
    \item $\rho_\lambda(t)$ is differentiable at all $t \neq 0$.
     \vspace{-1mm}
    \item $\rho_\lambda + \frac{\mu}{2} t^2$ is convex.
     \vspace{-1mm}
    \item $\lim_{t\rightarrow 0^+} \rho'_\lambda(t) = \lambda$.
     \vspace{-1mm}
\end{itemize}
If in addition there is some scalar $\gamma \in (0,\infty)$ such that $\rho'_\lambda = 0$ for all $t\geq \gamma \lambda$, then $\rho_\lambda$ is $(\mu,\gamma)$-amenable.

\end{definition}
Two example $(\mu,\gamma)$ amenable regularizers are the SCAD \cite{fan2001variable} and MCP \cite{zhang2010nearly} penalties. For convenience, define $q_\lambda(t) = \lambda |t| - \rho_\lambda(t)$. If $\rho_\lambda$ is $(\mu,\gamma)$ amenable, then $q_\lambda$ is everywhere differentiable.

Applying a $(\mu,\gamma)$ regularizer $\rho_\lambda$ on the row 2-norms we have the following objective function:
\begin{align}
\hat{\theta} =& \arg\min_{ \|\theta\|_{1,2} \leq R} \left\{\sum_{j=1}^q\left[ \frac{1}{2}\theta_{:j}^T \frac{X_j^T X_j}{n} \theta_{:j}  - \frac{y_j^T X_j}{n}\theta_{:j}\right]\right. \nonumber\\&\qquad\left.+ \sum_{i=1}^p  \rho_\lambda\left(\|\theta_{i:}\|_{2}\right)\right\},
\label{eq:nonconvex}
\end{align}
For convenience, define the unregularized loss function
\begin{equation}\label{eq:lossfunc}
    \mathcal{L}_n(\theta) = \sum_{j=1}^q\left[ \frac{1}{2}\theta_{:j}^T \frac{X_j^T X_j}{n} \theta_{:j}  - \frac{y_j^T X_j}{n}\theta_{:j}\right].
\end{equation}

We show below that the objective \eqref{eq:nonconvex} is a convex problem when $R$ and $\mu$ are chosen appropriately. We thus optimize the objective \ref{eq:nonconvex} using proximal gradient descent. Since the function $q_\lambda(t) = \lambda |t| - \rho_\lambda(t) $ is everywhere differentiable, it can be included in the gradient step computation, leaving the proximal step to be the proximal operator for the L-1,2 norm. This proximal operator is simply a soft thresholding on the norms of the rows of $\theta$, i.e. setting the rows of $\theta$ as $\max(0, \|\theta_{i:}\|_2 - \lambda)\frac{\theta_{i:}}{\|\theta_{i:}\|_2}$. If an optimization step would go outside the constraint set, we reduce the step size until the constraint is satisfied. A summary of the optimization algorithm is in supplement Section \ref{app:optim}. 






\subsection{Theoretical Analysis}







For the analysis, we make the additional assumption\footnote{We use the definition of subgaussianity from \cite{vershynin2010introduction}.}
\begin{assumption}[Subgaussianity]
Assume that conditioned on $T = t$, $X$ is subgaussian with parameter bounded from above by $\sigma_x$ for all $t$, and the noise term $\epsilon$ given in Asspt. \ref{ass:lin} is subgaussian with parameter $\sigma_\epsilon$.
\end{assumption}

We can then state the following bound.
\begin{theorem}\label{thm:consistency}
Suppose $p > q$ and for all $j = 1,\dots, q$, $y_j = (\theta^\ast_{:j})^T X_j + \epsilon$ where $X$ and $\epsilon$ are subgaussian, and all $\theta^\ast_{:j}$ have support contained in some unknown set $S$ with $|S| = k$ where $k$ is unknown. Furthermore, choose $(\lambda,R)$ such that $\|\theta^\ast\|_{1,2} < \frac{R}{2}$ and $c_\ell \sqrt{\frac{q\log p}{n}} \leq \lambda \leq \frac{c_u \sqrt{q}}{R}$, and assume $n \geq C \max\{R^2,k\}q \log p$ for some constants $c_\ell, c_u, C$ described in the proof. 
Suppose $\rho_\lambda$ is a $(\mu,\gamma)$-amenable regularizer with $\mu < \frac{1}{2}\min_j \lambda_{\min} (\Sigma^{(j)}_x)$ where $\Sigma^{(j)}_x = \mathbb{E}_{X_j}\frac{1}{n}X_j^T X_j$. Finally, suppose that 
\begin{equation}\label{eq:mintheta}
\theta^\ast_{\min} := \min_{i\in S} \|\theta^\ast_{i:}\|_2 \geq \lambda\gamma  + c_3 \sqrt{\frac{\log p}{n}}.
\end{equation}
Then with probability at least $1 - c_1 \exp(-c_2 \min[k, \log p])$ the objective \eqref{eq:nonconvex} has a unique stationary point $\hat{\theta}$ with support equal to $S$ and 
\[
\|\hat{\theta} - \theta^\ast\|_{\infty,\infty} \leq c_3 \sqrt{\frac{\log p}{n}}.
\]
\end{theorem}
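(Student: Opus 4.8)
The plan is to use a primal--dual witness (PDW) construction adapted to the group-sparse nonconvex setting, following the framework for support recovery of nonconvex $M$-estimators in \cite{loh2017support}. The first step is to show that the objective \eqref{eq:nonconvex} is convex on the feasible set $\{\|\theta\|_{1,2} \le R\}$. By $(\mu,\gamma)$-amenability, $\rho_\lambda(\|\theta_{i:}\|_2) + \frac{\mu}{2}\|\theta_{i:}\|_2^2$ is convex in $\theta_{i:}$, so the penalty is convex after subtracting the quadratic $\frac{\mu}{2}\|\theta\|_F^2$. It therefore suffices to show $\mathcal{L}_n$ is $\mu$-strongly convex on the relevant directions. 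Since each block Hessian is $\frac{1}{n}X_j^T X_j$, I would invoke a restricted strong convexity (RSC) bound: for subgaussian $X_j$ with $n \gtrsim \max\{R^2,k\}q\log p$ samples, $\frac{1}{n}X_j^T X_j \succeq \frac{1}{2}\Sigma_x^{(j)}$ over the cone of feasible perturbations, with high probability. Combined with $\mu < \frac{1}{2}\min_j \lambda_{\min}(\Sigma_x^{(j)})$, this renders the full objective convex, so any stationary point is a global minimizer.

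The second step is the PDW construction. I would define the oracle estimator as the minimizer of \eqref{eq:nonconvex} subject to the extra constraint $\theta_{S^c,:} = 0$; by the RSC bound restricted to the $k$-dimensional support this subproblem is strongly convex and has a unique solution $\hat\theta$. Since $\|\theta^\ast\|_{1,2} < R/2$ and the error will be small, the $\|\theta\|_{1,2}\le R$ constraint is inactive, so I may analyze the first-order conditions of the support-unconstrained problem. Writing $\rho_\lambda(t) = \lambda t - q_\lambda(t)$ with $q_\lambda$ differentiable and $q_\lambda'(0)=0$, stationarity reads $\nabla \mathcal{L}_n(\hat\theta)_{i:} - \nabla q_\lambda(\hat\theta)_{i:} + \lambda z_{i:} = 0$, where $z_{i:} = \hat\theta_{i:}/\|\hat\theta_{i:}\|_2$ on the support and $\|z_{i:}\|_2 \le 1$ off the support is the $L$-$1,2$ subgradient. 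The crux is \emph{strict dual feasibility}: for every row $i \notin S$ I need $\|\nabla\mathcal{L}_n(\hat\theta)_{i:}\|_2 < \lambda$, certifying $\hat\theta_{i:} = 0$. Because $\nabla q_\lambda$ vanishes at zero rows, this reduces to bounding the off-support gradient of the loss; under the linear model and subgaussian concentration each such row gradient concentrates at scale $\sqrt{q\log p/n}$, and a union bound over the $p-k$ off-support rows together with $\lambda \ge c_\ell\sqrt{q\log p/n}$ yields strict feasibility with the stated probability.

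The third step handles the on-support behavior and the $\ell_{\infty,\infty}$ bound. The $\gamma$-amenability gives $\rho'_\lambda(t)=0$ for $t \ge \gamma\lambda$; the condition \eqref{eq:mintheta} combined with an entrywise perturbation bound $\|\hat\theta_{i:}-\theta^\ast_{i:}\|_2 \le c_3\sqrt{\log p/n}$ guarantees $\|\hat\theta_{i:}\|_2 \ge \gamma\lambda$ for every $i \in S$, so the penalty is flat there and exerts no bias on the support. Hence $\hat\theta_S$ equals the constrained least-squares (oracle) solution, whose entrywise error I bound by inverting the strongly convex restricted Hessian and applying subgaussian gradient concentration, giving $\|\hat\theta-\theta^\ast\|_{\infty,\infty}\le c_3\sqrt{\log p/n}$. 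Uniqueness then follows because convexity of the objective plus strict dual feasibility forces every stationary point to share the support $S$ and therefore coincide with the unique oracle solution.

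The main obstacle will be the strict dual feasibility step, where the group structure and nonconvexity interact: I must control $\|\nabla\mathcal{L}_n(\hat\theta)_{i:}\|_2$ uniformly over off-support rows while accounting for the coupling introduced by $\hat\theta_S$, which enters the gradient through the cross term $\frac{1}{n}X_j^T X_j$. Getting this dependence right is exactly what forces the sample-size requirement $n \gtrsim \max\{R^2,k\}q\log p$ and the lower bound on $\lambda$, and is the technically delicate heart of the argument.
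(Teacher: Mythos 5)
Your overall route is the same primal--dual witness construction the paper uses: solve the oracle program restricted to $S$, show the $\|\theta\|_{1,2}\le R$ constraint is inactive, use $\gamma$-amenability together with the $\theta^\ast_{\min}$ condition to make the penalty flat on the support so the oracle solution coincides with restricted least squares (giving the $\ell_{\infty,\infty}$ rate), and certify $\hat\theta_{S^c}=0$ by strict dual feasibility of the $L$-$1,2$ subgradient. Those steps match Steps 1--2 of the paper's proof and the supporting Lemmas \ref{lem:stepOne} and \ref{lem:5}, and you correctly identify dual feasibility as the delicate concentration step.

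The genuine gap is your first step and the uniqueness argument that leans on it. You claim RSC gives $\frac{1}{n}X_j^TX_j \succeq \frac{1}{2}\Sigma_x^{(j)}$ ``over the cone of feasible perturbations'' and that this renders the full objective \eqref{eq:nonconvex} convex, so that any stationary point is a global minimizer. This is false in the regime of interest: the feasible set $\{\|\theta\|_{1,2}\le R\}$ is a full-dimensional ball, so feasible perturbations span all of $\mathbb{R}^{p\times q}$, while with $n=O(kq\log p)\ll pq$ the matrix $\frac{1}{n}X_j^TX_j$ has a null space of dimension at least $p-n$. The RSC lower bound holds only up to the tolerance $-\tau_1\frac{\log p}{n}\|\Delta\|_{1,2}^2$, which is useful only for directions with $\|\Delta\|_{1,2}^2/\|\Delta\|_F^2 \lesssim n/(q\log p)$; along dense null directions the shifted function $\mathcal{L}_n(\theta)-\frac{\mu}{2}\|\theta\|_F^2$ is strictly concave and the program remains genuinely nonconvex. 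Consequently stationarity does not imply global optimality, and you cannot conclude from convexity that every stationary point shares the support $S$. The paper closes this hole in Step 3 via Lemma \ref{lem:3}: it takes an \emph{arbitrary} stationary point $\tilde\theta$, combines the first-order conditions at $\tilde\theta$ and at $\hat\theta$ with the RSC inequality and the dual-feasibility bound $\|\hat z_{S^c}\|_{\infty,2}\le 1/2$ to derive a cone condition (Lemma \ref{lem:4}) giving $\|\tilde\nu\|_{1,2}\lesssim\sqrt{k}\,\|\tilde\nu\|_F$, and only then deduces $\tilde\theta_{S^c}=0$; uniqueness finally follows from strong convexity restricted to $\mathbb{R}^S$ (Lemma \ref{lem:RSC}), not on the whole space. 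Without an argument of this type the ``unique stationary point'' claim of the theorem is unproven.
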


\begin{remark}
This proof technique can also yield consistency for the L-1,2 norm regularizer with an appropriate incoherence assumption, see \cite{loh2017support} Proposition 3.
\end{remark}
\begin{remark}
The infinity norm error rate in Theorem \ref{thm:consistency} is optimal (since it coincides with the estimation error of the optimal oracle estimator).
\end{remark}
\begin{remark}[L2 error bounds]
The infinity norm bounds given in Theorem \ref{thm:consistency} yield tight L-$\infty,2$ and Frobenius norm bounds via standard norm inequalities.
\end{remark}


\begin{proof}[Proof of Theorem \ref{thm:consistency}]
Various steps in the proof are outlined below:
\begin{enumerate}
    \item[0.] Define and verify a joint Restricted Strong Convexity condition.
    \item[1.] Optimize the \emph{oracle} program where the supports of $\hat{\theta}_{:j}$ are restricted to the true $S$:
    \begin{align}\nonumber
\hat{\theta} = \arg\min_{\theta \in S, \|\theta\|_{1,2} \leq R} &\sum_{j=1}^q\left[ \frac{1}{2}\theta_{:j}^T \frac{X_j^T X_j}{n} \theta_{:j}  - \frac{y_j^T X_j}{n}\theta_{:j}\right] \\&+ \sum_{i\in S}  \rho_\lambda\left(\|\theta_{i:}\|_{2}\right),
    \label{eq:objfun}
    \end{align}
    and show the solution is in the interior of the constraint set. Under the restricted strong convexity assumption, this implies that the solution is a zero subgradient point.
    \item[2.] Define the dual variable $\hat z$ where $\hat{z}_S \in \nabla \|\hat{\theta}_S\|_{1,2}$ and $\hat{z}_{S^c}$ satisfying the zero subgradient condition, and establish \emph{strict dual feasibility} of $\hat{z}_{S^c}$ by showing that $\|\hat{z}_{S^c}\|_{\infty,2} \leq 1$. This implies $\hat{\theta}$ is a stationary point of the full objective \eqref{eq:nonconvex}.
    \item[3.] Show that $\hat{\theta}$ is the unique global minimum of the full objective \eqref{eq:nonconvex}.
\end{enumerate}

\paragraph{Step 0:} First, we verify a restricted strong convexity condition. Adapted from the $q=1$ case in \cite{loh2017support}, we require the following property of the loss function:
\begin{definition}[Joint Restricted Strong Convexity (Joint RSC)]
We say a loss $\mathcal{L}_n(\theta)$, $\theta \in \mathbb{R}^{p\times q}$ satisfies an ($\alpha,\tau$) joint RSC condition if for all $\Delta \in \mathbb{R}^{p\times q}$
\begin{align}
\langle \nabla \mathcal{L}_n&(\theta + \Delta) - \nabla \mathcal{L}_n(\theta), \Delta \rangle \label{eq:RSC}\\&\geq \left\{\begin{array}{ll} \alpha_1 \|\Delta\|_F^2 - \tau_1 \frac{\log p}{n} \|\Delta\|_{1,2}^2 & \|\Delta\|_F \leq 1\\ 
\alpha_2 \|\Delta\|_F - \tau_2\sqrt{ \frac{\log p}{n}} \|\Delta\|_{1,2} & \|\Delta\|_F \geq 1.
\end{array}\right.\nonumber
\end{align}
\end{definition}

The following is proven in supplement Section \ref{app:lemRSC}.
\begin{lemma}[Joint RSC for least squares loss]
Assume that $n \geq O(k \log p)$ and $n \geq 4 R^2 q \log p$. With high probability (at least $1 - q c_1 \exp(-cn)$), $\mathcal{L}_n$ is $(\alpha, \tau)$-joint RSC for $\alpha_1 = \alpha_2 = \frac{1}{2}\min_j(\lambda_{\min }(\Sigma_x^{(j)}))$ and $\tau_1 = q$, $\tau_2 = \sqrt{q}$. Furthermore, the objective \eqref{eq:objfun} is strongly convex on $\mathbb{R}^S$.
\label{lem:RSC}
\end{lemma}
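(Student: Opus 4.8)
The plan is to exploit the fact that the least-squares loss \eqref{eq:lossfunc} is separable across treatment columns, so that the RSC inner product collapses to a sum of single-task quadratic forms, and then to invoke the known single-task RSC bound column by column and stitch the pieces together with norm inequalities. Concretely, since $\nabla_{\theta_{:j}}\mathcal{L}_n(\theta) = \frac{X_j^T X_j}{n}\theta_{:j} - \frac{X_j^T y_j}{n}$ depends only on column $j$, the left-hand side of \eqref{eq:RSC} equals $\sum_{j=1}^q \frac{\|X_j\Delta_{:j}\|_2^2}{n}$. I would then apply the $q=1$ restricted strong convexity result of \cite{loh2017support} (the two-regime bound for subgaussian designs) to each $X_j$ and $\Delta_{:j}$, with curvature constant $\frac12\lambda_{\min}(\Sigma_x^{(j)})$ and per-task tolerance constants; a union bound over the $q$ tasks yields the stated failure probability $1 - q c_1\exp(-cn)$, each event requiring only $n \gtrsim \log p$.

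The combination then uses three elementary norm relations: $\|\Delta_{:j}\|_1 \le \|\Delta\|_{1,2}$ for every $j$ (since $|\Delta_{ij}| \le \|\Delta_{i:}\|_2$), $\sum_{j=1}^q\|\Delta_{:j}\|_1 \le \sqrt{q}\,\|\Delta\|_{1,2}$ (since $\|\Delta_{i:}\|_1 \le \sqrt q\,\|\Delta_{i:}\|_2$), and $\sum_j\|\Delta_{:j}\|_2^2 = \|\Delta\|_F^2$. In the regime $\|\Delta\|_F \le 1$ every column obeys $\|\Delta_{:j}\|_2 \le 1$, so the quadratic branch of the single-task bound applies uniformly; summing and using $\sum_j\|\Delta_{:j}\|_1^2 \le q\|\Delta\|_{1,2}^2$ produces $\alpha_1\|\Delta\|_F^2 - \tau_1\frac{\log p}{n}\|\Delta\|_{1,2}^2$ with $\tau_1 = q$, which is exactly the first case of \eqref{eq:RSC}.

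The main obstacle is the regime $\|\Delta\|_F \ge 1$, because the columns need not individually lie in the linear branch: some may have $\|\Delta_{:j}\|_2 \ge 1$ and others $\|\Delta_{:j}\|_2 < 1$. I would split the index set accordingly, apply the linear branch to the ``large'' columns and the quadratic branch to the ``small'' ones, and recombine the positive curvature terms using the elementary inequality $\sqrt{A} + B \ge \sqrt{A+B}$ (valid for $A,B \ge 0$ with $A + B \ge 1$), which converts $\sum_{\mathrm{large}}\|\Delta_{:j}\|_2 + \sum_{\mathrm{small}}\|\Delta_{:j}\|_2^2$ into the desired $\|\Delta\|_F$; here it is essential that $\alpha_1 = \alpha_2$. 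The remaining difficulty is the leftover penalty $\frac{\log p}{n}\sum_{\mathrm{small}}\|\Delta_{:j}\|_1^2$, which is formally quadratic in $\|\Delta\|_{1,2}$; I would linearize it using the sample-size hypothesis $n \ge 4R^2 q\log p$ together with feasibility $\|\Delta\|_{1,2} \le 2R$, which forces $\sqrt{\tfrac{\log p}{n}}\|\Delta_{:j}\|_1 \lesssim 1/\sqrt q$ and hence bounds the whole negative part by $O(\sqrt q)\sqrt{\tfrac{\log p}{n}}\|\Delta\|_{1,2}$, giving $\tau_2 = O(\sqrt q)$. This is the step where the specific sample-complexity assumption of the lemma is consumed, and where most of the care is needed.

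Finally, for strong convexity of the oracle objective \eqref{eq:objfun} on $\mathbb{R}^S$, I would restrict attention to matrices supported on the fixed set $S$ with $|S|=k$. On this per-column $k$-dimensional subspace $\|\Delta_{:j}\|_1^2 \le k\|\Delta_{:j}\|_2^2$, so the quadratic branch of the RSC bound, applied with $n \ge O(k\log p)$, yields a genuine strong-convexity modulus $\alpha_1 - \tau_1\frac{k\log p}{n} \ge \frac{\alpha_1}{2}$ for $\mathcal{L}_n$, with no $\ell_1$ blow-up surviving. Since $\rho_\lambda$ is $(\mu,\gamma)$-amenable, the composition $\rho_\lambda(\|\cdot\|_2)$ plus $\frac\mu2\|\cdot\|_2^2$ is convex (a convex nondecreasing function of a norm), so the regularizer contributes curvature at least $-\mu$; as the theorem assumes $\mu < \frac12\min_j\lambda_{\min}(\Sigma_x^{(j)}) = \alpha_1$, the net curvature is bounded below by a positive constant after choosing $n = O(k\log p)$ large enough, establishing strong convexity on $\mathbb{R}^S$.
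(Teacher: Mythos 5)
Your proposal is correct in outline and lands on the same constants, but it takes a more roundabout route than the paper at two points, and has one small quantitative slip. For the RSC bound itself, the paper does not feed in a two-regime single-task result: it invokes the proof of Corollary 1 of \cite{loh2015regularized}, which (using the column-decoupling you also note) gives the single bound $\langle \nabla\mathcal{L}_n(\theta+\Delta)-\nabla\mathcal{L}_n(\theta),\Delta\rangle \geq \tfrac12\min_j\lambda_{\min}(\Sigma_x^{(j)})\|\Delta\|_F^2-\tfrac{\log p}{n}\sum_j\|\Delta_{:j}\|_1^2$ valid for \emph{all} $\Delta$, converts $\sum_j\|\Delta_{:j}\|_1^2\leq q\|\Delta\|_{1,2}^2$ exactly as you do, and then obtains the $\|\Delta\|_F\geq 1$ branch globally from the quadratic one via $\|\Delta\|_F^2\geq\|\Delta\|_F$ together with the linearization $\tfrac{q\log p}{n}\|\Delta\|_{1,2}^2\leq\sqrt{\tfrac{q\log p}{n}}\,\|\Delta\|_{1,2}$ under feasibility and $n\geq 4R^2q\log p$ --- the same linearization you propose, but applied once to the whole matrix, which makes your column-splitting and the $\sqrt{A}+B\geq\sqrt{A+B}$ recombination unnecessary (that argument does go through, it is just extra work). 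For strong convexity on $\mathbb{R}^S$ the paper argues directly on the Hessian: $\nabla^2\mathcal{L}_n$ is block diagonal with blocks $X_j^TX_j/n$, and subgaussian covariance concentration gives $\lambda_{\min}([X_j^TX_j]_{SS}/n)>\mu$ once $n\gtrsim k\log p$, after which $(\mu,\gamma)$-amenability of $\rho_\lambda$ finishes the argument as you describe. Your alternative route through the RSC inequality and $\|\Delta\|_{1,2}^2\leq k\|\Delta\|_F^2$ also works, but it inherits the factor $\tau_1=q$, so the curvature you retain is $\alpha_1-\tfrac{qk\log p}{n}$, and the claim that a modulus of $\alpha_1/2$ suffices is too loose when $\mu\in(\alpha_1/2,\alpha_1)$: you need $n\gtrsim\tfrac{qk\log p}{\alpha_1-\mu}$ so that $\alpha_1-\tfrac{qk\log p}{n}-\mu>0$, which is the scaling that reappears in Lemma \ref{lem:3}. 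Everything else --- the union bound over the $q$ tasks and the norm inequalities --- matches the paper.
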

We also have that with high probability
\begin{equation}\label{eq:nabla}
\|\nabla\mathcal{L}_n(\theta^\ast)\|_{\infty,2} \leq c' \sqrt{\frac{q \log p}{n}},
\end{equation}
by applying a norm inequality (2-norm is $\leq \sqrt{q}$ times infinity norm)  to the union bounded bound in the proof of Corollary 1 in \cite{loh2015regularized} (the $q=1$ case) and using $q < p$. 

\textbf{Step 1:} We recall $\|\theta^\ast\|_{1,2} \leq R/2$ and use the joint RSC conditions to bound $\|\tilde{\nu}\|_{1,2}$, where we set $\tilde{\nu} := \hat{\theta} - \theta^*$.  
We state the result as a lemma, proven in supplement Section \ref{app:stepOne}.
\begin{lemma}\label{lem:stepOne}
Suppose $\hat{\theta}$ is a zero subgradient point of the objective \eqref{eq:objfun} supported on $S$, i.e.
\begin{equation}
\nabla \mathcal{L}_n(\hat{\theta}_S) + \nabla \rho_\lambda(\hat{\theta}_S) = 0.
\end{equation}
Then $\|\tilde{\nu}\|_{1,2} < \frac{R}{2}$, yielding $\|\hat{\theta}\|_{1,2} < R$.
\end{lemma}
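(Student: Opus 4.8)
The plan is to convert the first-order stationarity condition into a size bound on $\tilde{\nu}:=\hat{\theta}-\theta^\ast$, exploiting that $\tilde{\nu}$ is supported on $S$ (both $\hat{\theta}$ and $\theta^\ast$ are), so it has at most $k$ nonzero rows and $\|\tilde{\nu}\|_{1,2}\le\sqrt{k}\,\|\tilde{\nu}\|_F$. The starting point is to pair the stationarity condition $\nabla\mathcal{L}_n(\hat{\theta}_S)+z=0$, where $z$ is a subgradient of $\sum_{i\in S}\rho_\lambda(\|\cdot\|_2)$ at $\hat{\theta}$, against $\tilde{\nu}$, and to split $\langle\nabla\mathcal{L}_n(\hat{\theta}),\tilde{\nu}\rangle=\langle\nabla\mathcal{L}_n(\hat{\theta})-\nabla\mathcal{L}_n(\theta^\ast),\tilde{\nu}\rangle+\langle\nabla\mathcal{L}_n(\theta^\ast),\tilde{\nu}\rangle$.

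Since the paired stationarity condition reads $\langle\nabla\mathcal{L}_n(\hat{\theta}),\tilde{\nu}\rangle+\langle z,\tilde{\nu}\rangle=0$, applying the joint RSC inequality \eqref{eq:RSC} of Lemma \ref{lem:RSC} to the gradient-difference term rearranges to (the applicable right-hand side of \eqref{eq:RSC}) $\le |\langle\nabla\mathcal{L}_n(\theta^\ast),\tilde{\nu}\rangle|+|\langle z,\tilde{\nu}\rangle|$. Both linear terms I would control by the matrix Hölder bound $|\langle A,B\rangle|\le\|A\|_{\infty,2}\|B\|_{1,2}$: the gradient factor by \eqref{eq:nabla}, giving $c'\sqrt{q\log p/n}$, and the subgradient factor by amenability, since Definition \ref{def:amen} forces $\rho'_\lambda(t)\le\rho_\lambda(t)/t\le\lambda$ for $t>0$ and a subdifferential of $2$-norm at most $\lambda$ at any zero row, so $\|z\|_{\infty,2}\le\lambda$. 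This yields the master inequality (RHS of \eqref{eq:RSC}) $\le(\lambda+c'\sqrt{q\log p/n})\,\|\tilde{\nu}\|_{1,2}$.

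I would then split on the two branches of \eqref{eq:RSC}. When $\|\tilde{\nu}\|_F\le1$, substituting $\|\tilde{\nu}\|_{1,2}^2\le k\|\tilde{\nu}\|_F^2$ and absorbing $\tau_1\frac{\log p}{n}\|\tilde{\nu}\|_{1,2}^2$ into $\alpha_1\|\tilde{\nu}\|_F^2$ (valid since $n\ge O(k\log p)$ with $\tau_1=q$) gives $\|\tilde{\nu}\|_{1,2}\le\frac{2k}{\alpha_1}(\lambda+c'\sqrt{q\log p/n})$; feeding in $c_\ell\sqrt{q\log p/n}\le\lambda\le c_u\sqrt{q}/R$ bounds this by $\frac{2k(1+c'/c_\ell)c_u\sqrt{q}}{\alpha_1 R}$, which is forced below $R/2$ by the prescribed constants and sample size. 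The branch $\|\tilde{\nu}\|_F\ge1$ I would rule out by contradiction: the second line of \eqref{eq:RSC} with $\|\tilde{\nu}\|_{1,2}\le\sqrt{k}\|\tilde{\nu}\|_F$ and $\tau_2=\sqrt{q}$ forces $\alpha_2\le\sqrt{k}(\lambda+c'\sqrt{q\log p/n}+\tau_2\sqrt{\log p/n})$, whose right side is strictly below $\alpha_2$ under the hypotheses, so this case cannot occur. The stated conclusion then follows from the triangle inequality and $\|\theta^\ast\|_{1,2}<R/2$: $\|\hat{\theta}\|_{1,2}\le\|\theta^\ast\|_{1,2}+\|\tilde{\nu}\|_{1,2}<R/2+R/2=R$.

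I expect the conceptual steps to be routine and the real obstacle to be the constant bookkeeping, because the two norm conversions leave a factor of $k$ in the estimate $\|\tilde{\nu}\|_{1,2}\lesssim k\lambda$, so pushing it below $R/2$ rests delicately on the upper cap $\lambda\le c_u\sqrt{q}/R$ interacting with $n\ge C\max\{R^2,k\}q\log p$ and the lower cap $\lambda\ge c_\ell\sqrt{q\log p/n}$. I would first verify the $\lambda$-interval is nonempty (equivalently $n\gtrsim R^2\log p$), then fix $c_\ell,c_u,C$ simultaneously so that both the $\|\tilde{\nu}\|_F\ge1$ contradiction and the final $<R/2$ bound hold; this is precisely the step where the theorem's quantitative hypotheses are consumed.
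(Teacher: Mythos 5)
Your proposal is correct and shares the paper's overall skeleton --- first rule out $\|\tilde{\nu}\|_F>1$ via the second branch of \eqref{eq:RSC} (this is the paper's Lemma~\ref{lem:normLessOne}), then apply the first branch together with $\|\tilde{\nu}\|_{1,2}\le\sqrt{k}\|\tilde{\nu}\|_F$ and finish with the triangle inequality --- but it handles the regularizer term by a genuinely different decomposition. The paper controls $\langle\nabla\rho_\lambda(\hat{\theta}_S),\tilde{\nu}\rangle$ through the convexity of $\rho_\lambda+\frac{\mu}{2}t^2$, the inequality $\lambda\|\theta\|_{1,2}\le\rho_\lambda(\theta)+\frac{\mu}{2}\|\theta\|_F^2$, and subadditivity of $\rho_\lambda$, collapsing everything to $(\alpha_1-\mu)\|\tilde{\nu}\|_F^2\le 2\rho_\lambda(\theta^\ast_S)\le\lambda R$ and hence $\|\tilde{\nu}\|_{1,2}\le\sqrt{Rk\lambda/(\alpha_1-\mu)}$. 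You instead bound the subgradient directly, $\|\nabla\rho_\lambda(\hat{\theta})\|_{\infty,2}\le\lambda$ (your observation $\rho'_\lambda(t)\le\rho_\lambda(t)/t\le\lambda$ is exactly Lemma~8 of \cite{loh2017support}, which the paper itself invokes only later, in the proof of Lemma~\ref{lem:3}), and apply the H\"older bound \eqref{eq:Holder}; this sidesteps the $\mu$-convexity machinery entirely, never needs $\alpha_1-\mu>0$ at this stage, and yields the linear estimate $\|\tilde{\nu}\|_{1,2}\lesssim k\lambda/\alpha_1$, which in the operative regime $R\gtrsim k\lambda$ is no worse than the paper's square-root bound. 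Both routes terminate at the same implicit requirement $R\gtrsim k\lambda/\alpha_1$ (equivalently $R^2\gtrsim k\sqrt{q}$ given $\lambda\le c_u\sqrt{q}/R$), which the paper dispatches with ``$R>\frac{4k\lambda}{\alpha_1-\mu}$ under our assumptions''; likewise, your $\|\tilde{\nu}\|_F\ge1$ contradiction needs $\sqrt{k}\lambda\lesssim\alpha_2$ where the paper, compressing via $\|\tilde{\nu}\|_{1,2}\le 2R$ rather than $\sqrt{k}\|\tilde{\nu}\|_F$, needs $R\lambda\lesssim\alpha_2$ --- these coincide up to constants when $R\asymp\sqrt{k}$, so neither version is more demanding. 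You correctly identify the constant bookkeeping as the only delicate point, and your argument consumes the theorem's quantitative hypotheses in the same places the paper's does.
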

Since $\|\hat{\theta}\|_{1,2}$ is strictly less than $R$, $\hat\theta$ is in the interior of the constraint set, and thus has zero subgradient.


\textbf{Step 2:} 
Denote $\hat \Gamma^{(j)} = \frac{X_j^T X_j}{n}$, $ \hat \gamma^{(j)} = \frac{X_j^T y_j}{n}$. Then taking the gradients of \eqref{eq:objfun} yields for all $j$
\begin{equation}\label{eq:gradients}
\nabla \mathcal{L}_n (\theta_{:j}) = \hat{\Gamma}^{(j)} \theta_{:j} - \hat{\gamma}^{(j)}, \quad \nabla^2 \mathcal{L}_n (\theta_{:j}) = \hat{\Gamma}^{(j)}. 
\end{equation}
Consider the estimator $\hat{\theta}^{\mathcal{O}}$ formed by solving \eqref{eq:objfun} with $\lambda = 0$. 
We then can write
\[
\hat{\Gamma}^{(j)}(\hat\theta^{\mathcal{O}}_{:j} - \theta^\ast_{:j}) = \nabla \mathcal{L}_n(\hat{\theta}^{\mathcal{O}}_{:j}) - \nabla \mathcal{L}_n({\theta}^\ast_{:j}), \forall j,
\]
yielding (since $\hat{\Gamma}_{SS}^{(j)}$ is invertible since $n \geq k$ by assumption) 
\begin{equation}\label{eq:finalNorm}
\hat\theta^{\mathcal{O}}_{Sj} - \theta^\ast_{Sj} = (\hat{\Gamma}_{SS}^{(j)})^{-1} ( -(\hat{\Gamma}_{SS}^{(j)} \theta^\ast_{Sj} - \hat{\gamma}_S^{(j)}). 
\end{equation}
Appendix D.1.1 of \cite{loh2017support} showed that
\begin{equation}\label{eq:c3}
\left\|(\hat{\Gamma}_{SS}^{(j)})^{-1}(\hat{\Gamma}_{SS}^{(j)} \theta^\ast_{Sj} - \hat{\gamma}_S^{(j)})\right\|_\infty \leq \lambda_{\max}^{1/2}(\Sigma_{x}^{(j)}) \sigma_\epsilon \sqrt{\frac{2 \log p}{n}},
\end{equation}
with probability at least $1 - c''_1 \exp(-c''_2\min(k,\log p))$.


Hence we obtain via the union bound that 
\begin{equation}\label{eq:prenormbd}
\|\hat\theta^{\mathcal{O}} - \theta^\ast\|_{\infty,\infty}  \leq c_3 \sqrt{\frac{ \log p}{n}}, \: 
\|\hat\theta^{\mathcal{O}} - \theta^\ast\|_{\infty,2}  \leq c_3 \sqrt{\frac{q \log p}{n}} 
\end{equation}
with probability at least $1 - c_1 \exp(-c_2\min(k,\log p))$ (since $k > \log q $ and $p > q$) where $c_1,c_2,c_3$ are constants. 


Now we have the following result, proved in supplement Section \ref{app:lem5}.
\begin{lemma}\label{lem:5}

Suppose $\rho_\lambda$ is $(\mu,\gamma)$ amenable and 
\[
\theta^\ast_{\min} = \min_{i \in S} \|\theta^\ast_{i:}\|_2 \geq \lambda\gamma  + c_3 \sqrt{\frac{\log p}{n}}.
\]
Then with probability at least $1 - c_1 \exp(-c_2\min(k,\log p))$
\[
\lambda \hat{z}_{i:} - \nabla q_\lambda (\|\hat{\theta}_{i:}\|_2) = 0 \quad \forall i \in S.
\]
\end{lemma}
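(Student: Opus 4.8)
The plan is to reduce the claimed subgradient identity to a single lower bound on the estimated row norms, namely $\|\hat\theta_{i:}\|_2 \ge \gamma\lambda$ for every $i \in S$, and then exploit that an $(\mu,\gamma)$-amenable penalty is \emph{flat} beyond $\gamma\lambda$. First I would unpack the two objects in the statement. For $i \in S$ the row $\hat\theta_{i:}$ is nonzero, so the $L$-1,2 subgradient is forced to be $\hat z_{i:} = \hat\theta_{i:}/\|\hat\theta_{i:}\|_2$; and by the chain rule the vector $\nabla q_\lambda(\|\hat\theta_{i:}\|_2)$ equals $q_\lambda'(\|\hat\theta_{i:}\|_2)\,\hat\theta_{i:}/\|\hat\theta_{i:}\|_2$. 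Since $q_\lambda(t) = \lambda t - \rho_\lambda(t)$ gives $q_\lambda'(t) = \lambda - \rho_\lambda'(t)$, and $(\mu,\gamma)$-amenability yields $\rho_\lambda'(t) = 0$ for all $t \ge \gamma\lambda$, the identity $\lambda \hat z_{i:} - \nabla q_\lambda(\|\hat\theta_{i:}\|_2) = 0$ holds the moment $\|\hat\theta_{i:}\|_2 \ge \gamma\lambda$, both terms collapsing to $\lambda\,\hat\theta_{i:}/\|\hat\theta_{i:}\|_2$. Equivalently, this says the penalty gradient $\nabla\rho_\lambda(\hat\theta_{i:})$ vanishes on $S$, the hallmark of the nonconvex penalty sitting in its flat region.

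So the whole lemma hinges on establishing $\min_{i\in S}\|\hat\theta_{i:}\|_2 \ge \gamma\lambda$. The natural route is to first control the \emph{unregularized} oracle estimator $\hat\theta^{\mathcal O}$ and then transfer the bound to $\hat\theta$. From the row-wise estimation bound in \eqref{eq:prenormbd} together with the triangle inequality, $\|\hat\theta^{\mathcal O}_{i:}\|_2 \ge \|\theta^\ast_{i:}\|_2 - \|\hat\theta^{\mathcal O}_{i:} - \theta^\ast_{i:}\|_2 \ge \theta^\ast_{\min} - c_3\sqrt{(\log p)/n}$, and the minimum-signal hypothesis \eqref{eq:mintheta} makes the right-hand side at least $\gamma\lambda$. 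This holds on the same high-probability event ($1 - c_1\exp(-c_2\min(k,\log p))$) on which \eqref{eq:prenormbd} is valid.

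The remaining, and most delicate, step is to conclude the same lower bound for the \emph{regularized} oracle solution $\hat\theta$ while avoiding the apparent circularity: one would like $\hat\theta = \hat\theta^{\mathcal O}$, but the naive argument for that seems to require the very norm bound we are trying to prove. I would break it by verifying directly that $\hat\theta^{\mathcal O}$ is itself a stationary point of the regularized program \eqref{eq:objfun}: because $\|\hat\theta^{\mathcal O}_{i:}\|_2 \ge \gamma\lambda$ for every $i\in S$ we have $\rho_\lambda'(\|\hat\theta^{\mathcal O}_{i:}\|_2) = 0$, hence $\nabla\rho_\lambda(\hat\theta^{\mathcal O}_S) = 0$; combined with $\nabla\mathcal L_n(\hat\theta^{\mathcal O}_S) = 0$ (the defining first-order condition of the $\lambda=0$ problem, cf.\ \eqref{eq:finalNorm}), this yields $\nabla\mathcal L_n(\hat\theta^{\mathcal O}_S) + \nabla\rho_\lambda(\hat\theta^{\mathcal O}_S) = 0$. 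By Lemma \ref{lem:RSC} the restricted objective \eqref{eq:objfun} is strongly convex on $\mathbb{R}^S$, so it has a unique stationary point, which must coincide with its minimizer $\hat\theta$; therefore $\hat\theta = \hat\theta^{\mathcal O}$ and $\|\hat\theta_{i:}\|_2 \ge \gamma\lambda$ for all $i\in S$, closing the argument.

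I expect the main obstacle to be exactly this circularity, together with the bookkeeping needed to ensure that every invoked high-probability event (the estimation bound \eqref{eq:prenormbd} and the strong convexity of Lemma \ref{lem:RSC}) holds simultaneously, so that the final failure probability remains $c_1\exp(-c_2\min(k,\log p))$. I would also flag a mild rate discrepancy between the per-row $\ell_2$ error $c_3\sqrt{q(\log p)/n}$ implied by the $\|\cdot\|_{\infty,2}$ bound in \eqref{eq:prenormbd} and the $c_3\sqrt{(\log p)/n}$ appearing in \eqref{eq:mintheta}: either $c_3$ is meant to absorb the $\sqrt q$ factor or the signal-strength threshold should carry it, but this affects only constants and not the structure of the proof.
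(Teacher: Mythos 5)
Your reduction of the lemma to the single inequality $\min_{i\in S}\|\hat\theta_{i:}\|_2 \ge \gamma\lambda$, the use of the flatness of $\rho_\lambda$ beyond $\gamma\lambda$, and your resolution of the circularity (show $\hat\theta^{\mathcal O}$ is a zero-subgradient point of \eqref{eq:objfun} and invoke the strong convexity of Lemma \ref{lem:RSC} to conclude $\hat\theta=\hat\theta^{\mathcal O}$) all match the paper's argument; the last part is exactly how the text immediately following Lemma \ref{lem:5} closes the loop.

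The one substantive issue is the point you flag at the end and then dismiss as ``only constants.'' It is not a constant: $q$ is a growing parameter in this paper (the whole point of the joint regularizer), so bounding $\|\hat\theta^{\mathcal O}_{i:}-\theta^\ast_{i:}\|_2$ by the $\|\cdot\|_{\infty,2}$ estimate $c_3\sqrt{q\log p/n}$ and applying the plain triangle inequality proves the lemma only under the strictly stronger signal condition $\theta^\ast_{\min}\ge\gamma\lambda+c_3\sqrt{q\log p/n}$, not the stated one. The paper avoids the $\sqrt q$ by a different lower bound: since $\|\hat\theta_{i:}\|_2 \ge \langle \hat\theta_{i:},\theta^\ast_{i:}/\|\theta^\ast_{i:}\|_2\rangle$, one gets
\[
\|\hat{\theta}_{i:}\|_2 \geq \|\theta^\ast_{i:}\|_2 - \bigl|\langle\hat{\theta}_{i:} - \theta^\ast_{i:},\; \theta^\ast_{i:}/ \|\theta^\ast_{i:}\|_2\rangle\bigr|,
\]
and the error projected onto a \emph{single fixed unit direction} concentrates at rate $c_3\sqrt{\log p/n}$ with no $q$ dependence (an extension of Appendix D.1.1 of \cite{loh2017support}). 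That directional-projection step is the missing idea; with it your argument goes through under the hypothesis as stated, and without it you have proved a weaker lemma.
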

Lemma \ref{lem:5} implies that if $\theta^\ast_{\min}$ satisfies the given condition, then $\nabla_{\theta_{S}}\rho_\lambda(\hat{\theta}_{S:}) = 0$, implying that $\hat\theta^{\mathcal{O}}$ is a zero subgradient point of \eqref{eq:objfun} and hence $\hat \theta = \hat\theta^{\mathcal{O}}$. Hence the bound \eqref{eq:prenormbd} also applies to $\hat{\theta}$ as in the theorem statement.


Now, define the shifted objective function as 
\begin{equation}\label{eq:shifted}
\bar{\mathcal{L}}_n(\theta) = \mathcal{L}_n(\theta) - \sum\nolimits_{i=1}^p q_\lambda(\|\theta_{i:}\|_2). 
\end{equation}
Making $\hat{\theta} = (\hat{\theta}_S, 0)$, the zero subgradient condition becomes
\begin{equation}\label{eq:zerosub}
\nabla\bar{\mathcal{L}}_n(\hat{\theta}) + \lambda \hat{z} = 0,
\end{equation}
where $\hat{z} \in \partial \|\hat \theta\|_{1,2}$.
Note that where rows of $\hat{\theta}$ are zero, the corresponding rows of $\hat{z}$ can be any vector in the unit 2-sphere. Where the rows are nonzero, it is a unit vector parallel to the row. Hence we have the strict dual feasibility condition $\|\hat{z}_c\|_{\infty,2} \leq 1 - \delta$ for some delta we choose later. 

We expand the zero subgradient condition \eqref{eq:zerosub} as
\begin{align}\label{eq:zerosub2}
&\left(\nabla{\mathcal{L}_n}(\hat{\theta}_{i:}) -  \nabla{\mathcal{L}_n}(\theta^\ast_{i:})\right) \\\nonumber&+ \left(\nabla{\mathcal{L}_n}(\theta^\ast_{i:}) - \nabla q_\lambda (\|\hat{\theta}_{i:}\|_2)\right) + \lambda \hat{z}_{i:} = 0,\quad \forall i.
\end{align}
Note that by the selection property, for all $i \notin S$, $\nabla q_\lambda (\|\hat{\theta}_{i:}\|_2) =  \nabla q_\lambda(0) = 0$. Additionally, by Lemma \ref{lem:5} combined with \eqref{eq:prenormbd} and the assumption \eqref{eq:mintheta} we know that $\lambda \hat{z}_{i:} - \nabla q_\lambda (\|\hat{\theta}_{i:}\|_2) = 0$ for all $i \in S$. 

Using \eqref{eq:gradients} we can then simplify the condition \eqref{eq:zerosub2} as
\begin{align}\label{eq:zerosub3}
\hat{\Gamma}^{(j)} (\hat{\theta}_{:j} \!- \!\theta^\ast_{:j} ) \!+\! \hat{\Gamma}^{(j)} \theta^\ast_{:j} \!-\! \hat{\gamma}^{(j)} + \left[\!\!\!\!\begin{array}{c} 0 \\ (\hat{z}_{S^c})_{:j} \end{array}\!\!\!\!\right] = 0,  \forall j.
\end{align}
Since furthermore we have $\hat{\theta}_{S^c} = \theta^\ast_{S^c} = 0$, this allows us to solve for each $[\hat{z}_{S^c}]_{:j}$ separately: 
\[
[\hat{z}_{S^c}]_{:j} = \frac{1}{\lambda}\left[ \hat{\gamma}^{(j)}_{S^c} - \hat{\Gamma}^{(j)}_{S^cS} [\hat{\Gamma}^{(j)}_SS]^{-1} \hat{\gamma}^{(j)}_S \right]
\]
where we have partitioned $\hat{\Gamma}^{(j)} = { \left[\begin{array}{cc} \scriptstyle\hat{\Gamma}^{(j)}_{SS} & \scriptstyle\hat{\Gamma}^{(j)}_{SS^c} \\\scriptstyle\hat{\Gamma}^{(j)}_{S^cS} & \scriptstyle\hat{\Gamma}^{(j)}_{S^cS^c} \end{array}\right]}$.

This quantity was analyzed by \cite{loh2017support} Appendix D.1.1. With probability at least $1 - c \exp(-c' \log p)$, 
\[
\| \hat{\gamma}^{(j)}_{S^c} - \hat{\Gamma}^{(j)}_{S^cS} [\hat{\Gamma}^{(j)}_{SS}]^{-1} \hat{\gamma}^{(j)}_S\|_{\infty} \leq C \sqrt{\frac{\log p}{n}},
\]
assuming $n \geq O(k \log p)$. Using the union bound and definition of $\infty,2$ norm, we then have that with probability at least $1 - c \exp(\log q-c' \log p)$
\begin{equation}\label{eq:step2}
\|\hat{z}_{S^c}\|_{\infty,2} \leq C \sqrt{\frac{q\log p}{n}}.
\end{equation}
Strict dual feasibility follows whenever $\lambda > C \sqrt{\frac{q\log p}{n}}$.




\textbf{Step 3:} Since by Step 2 $\hat{\theta}$ is a zero subgradient point of the full objective \eqref{eq:nonconvex}, it is also a local optima of the full objective \eqref{eq:nonconvex}. Furthermore, Lemma \ref{lem:3}, proven in supplement Section \ref{app:lem3} shows all local optima of \eqref{eq:nonconvex} must be supported on $S$.


\begin{lemma}\label{lem:3}
Suppose that $\tilde \theta$ is a stationary point of \eqref{eq:nonconvex} with $\|\hat{z}_{S^c}\|_{\infty,2} \leq 1/2$ and the conditions of Theorem \ref{thm:consistency} hold with $c_u = \frac{\alpha_2}{8}$ and $c_\ell = \sqrt{q^{-1} \tau_1 \alpha_2}$, and $n \geq \max\{\frac{16}{\alpha^2_2} R^2 \tau_2^2,  \frac{200\tau_1}{\alpha_1 - \mu} k \} \log p $. Then for all $j$, $\mathrm{supp}(\tilde \theta_{:j}) \subseteq S$.
\end{lemma}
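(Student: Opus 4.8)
The plan is to run a primal--dual comparison between the arbitrary stationary point $\tilde\theta$ and the oracle solution $\hat\theta$ built in Steps 1--2, following the $q=1$ template of \cite{loh2017support}, and to extract $\tilde\theta_{S^c}=0$ from a single master inequality. First I rewrite the objective \eqref{eq:nonconvex} in the shifted form $\bar{\mathcal{L}}_n(\theta)+\lambda\|\theta\|_{1,2}$ via \eqref{eq:shifted}, where $q_\lambda$ is everywhere differentiable. A stationary point of the constrained nonconvex program satisfies the variational inequality $\langle\nabla\bar{\mathcal{L}}_n(\tilde\theta)+\lambda\tilde z,\theta-\tilde\theta\rangle\ge 0$ for every feasible $\theta$, with $\tilde z\in\partial\|\tilde\theta\|_{1,2}$, whereas the oracle point obeys the exact zero--subgradient identity $\nabla\bar{\mathcal{L}}_n(\hat\theta)+\lambda\hat z=0$ already established in Step 2. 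Taking $\theta=\hat\theta$ (feasible since $\|\hat\theta\|_{1,2}\le R$), subtracting the two relations, and pairing with $\tilde\nu:=\tilde\theta-\hat\theta$ gives the master inequality
\begin{equation*}
\langle\nabla\bar{\mathcal{L}}_n(\tilde\theta)-\nabla\bar{\mathcal{L}}_n(\hat\theta),\tilde\nu\rangle+\lambda\langle\tilde z-\hat z,\tilde\nu\rangle\le 0.
\end{equation*}

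Next I would lower--bound each term. For the dual term, expand $\langle\tilde z-\hat z,\tilde\nu\rangle=\|\tilde\theta\|_{1,2}-\langle\tilde z,\hat\theta\rangle-\langle\hat z,\tilde\theta\rangle+\|\hat\theta\|_{1,2}$ and apply H\"older in the dual norm $\|\cdot\|_{\infty,2}$ with $\|\tilde z\|_{\infty,2}\le 1$, $\|\hat z_S\|_{\infty,2}\le 1$ and the strict dual feasibility $\|\hat z_{S^c}\|_{\infty,2}\le 1/2$ (the lemma's hypothesis, i.e.\ \eqref{eq:step2} with margin $1/2$). Since $\hat\theta_{S^c}=0$ gives $\tilde\nu_{S^c}=\tilde\theta_{S^c}$ and $\|\cdot\|_{1,2}$ splits over the disjoint blocks $S,S^c$, this collapses to $\langle\tilde z-\hat z,\tilde\nu\rangle\ge\tfrac12\|\tilde\nu_{S^c}\|_{1,2}$. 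For the loss term I invoke Joint RSC of $\mathcal{L}_n$ (Lemma \ref{lem:RSC}) and correct it by the curvature of the subtracted regularizer: by $(\mu,\gamma)$--amenability (Definition \ref{def:amen}), $\tfrac{\mu}{2}t^2-q_\lambda(t)$ is convex, so $v\mapsto q_\lambda(\|v\|_2)$ has gradient--monotonicity slack at most $\mu$ per row; hence in the regime $\|\tilde\nu\|_F\le 1$ the loss difference inherits a Joint RSC bound with $\alpha_1$ replaced by $\alpha_1-\mu>0$ (positivity from $\mu<\tfrac12\min_j\lambda_{\min}(\Sigma_x^{(j)})=\alpha_1$). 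In the regime $\|\tilde\nu\|_F\ge 1$ I instead use that amenability forces $|q_\lambda'|\le\lambda$ (since $\rho'_\lambda(0^+)=\lambda$ and $\rho'_\lambda$ vanishes beyond $\gamma\lambda$), so the subtracted gradient difference is controlled linearly by $2\lambda\|\tilde\nu\|_{1,2}$.

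I then combine the two bounds in each RSC regime. For $\|\tilde\nu\|_F\le 1$ the master inequality yields $\tfrac\lambda2\|\tilde\nu_{S^c}\|_{1,2}+(\alpha_1-\mu)\|\tilde\nu\|_F^2\le\tau_1\tfrac{\log p}{n}\|\tilde\nu\|_{1,2}^2$; bounding $\|\tilde\nu_S\|_{1,2}\le\sqrt k\,\|\tilde\nu\|_F$ and $\|\tilde\nu\|_{1,2}^2\le 2k\|\tilde\nu\|_F^2+2\|\tilde\nu_{S^c}\|_{1,2}^2$, the sample--size condition $n\ge\tfrac{200\tau_1}{\alpha_1-\mu}k\log p$ absorbs the Frobenius contribution and leaves $\tfrac\lambda2\|\tilde\nu_{S^c}\|_{1,2}\le 2\tau_1\tfrac{\log p}{n}\|\tilde\nu_{S^c}\|_{1,2}^2$. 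This dichotomy forces either $\tilde\nu_{S^c}=0$ (the claim) or $\|\tilde\nu_{S^c}\|_{1,2}\ge\tfrac{\lambda n}{4\tau_1\log p}$; the second branch is excluded because $\|\tilde\nu_{S^c}\|_{1,2}\le\|\tilde\nu\|_{1,2}\le 2R$ while the lower bound $\lambda\ge c_\ell\sqrt{q\log p/n}$ with $c_\ell=\sqrt{q^{-1}\tau_1\alpha_2}$ pushes the threshold past $2R$. The regime $\|\tilde\nu\|_F\ge 1$ is made vacuous: there the large--norm RSC branch together with $\langle\tilde z-\hat z,\tilde\nu\rangle\ge 0$ give $\alpha_2\|\tilde\nu\|_F\lesssim(\tau_2\sqrt{\log p/n}+\lambda)\|\tilde\nu\|_{1,2}$, which contradicts $\|\tilde\nu\|_F\ge 1$, $\|\tilde\nu\|_{1,2}\le 2R$ once $n\ge\tfrac{16}{\alpha_2^2}R^2\tau_2^2\log p$ and $\lambda\le c_u\sqrt q/R$ with $c_u=\alpha_2/8$. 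Hence every stationary point lies in the small--norm regime and satisfies $\tilde\theta_{S^c}=0$.

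The main obstacle is not any single estimate but the simultaneous calibration of constants across the two RSC regimes: the positive curvature $\alpha_1-\mu$, the statistical slack $\tau\tfrac{\log p}{n}$, the fixed dual margin $1/2$, and the admissible window $c_\ell\sqrt{q\log p/n}\le\lambda\le c_u\sqrt q/R$ must be balanced so that the small--norm dichotomy closes strictly and the large--norm case is genuinely impossible. Propagating these through with $\tau_1=q$, $\tau_2=\sqrt q$, $\alpha_1=\alpha_2$ from Lemma \ref{lem:RSC}, and choosing the numerical factors ($200$, $16$, $c_\ell$, $c_u$) so all inequalities are strict, is the delicate bookkeeping; a secondary subtlety is verifying the per--row curvature--$\mu$ bound for the composite $v\mapsto q_\lambda(\|v\|_2)$, since amenability is stated for the scalar $q_\lambda$ and must be lifted through the $\ell_2$ row--norm.
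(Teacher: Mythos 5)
Your proposal follows the same primal--dual witness strategy as the paper's proof: compare the stationary point $\tilde\theta$ to the oracle solution $\hat\theta$ through the first-order conditions of the shifted objective \eqref{eq:shifted}, extract $\tfrac{\lambda}{2}\|\tilde\nu_{S^c}\|_{1,2}$ from the dual term using the margin $\|\hat z_{S^c}\|_{\infty,2}\le 1/2$, control the loss difference by joint RSC corrected by $\mu$, and first rule out $\|\tilde\nu\|_F>1$ via the large-norm RSC branch together with $\|\nabla\rho_\lambda(\theta)\|_{\infty,2}\le\lambda$ and the window $\lambda\le c_u\sqrt q/R$, $n\ge\tfrac{16}{\alpha_2^2}R^2\tau_2^2\log p$ --- all of this matches the paper step for step, including your (correct) observation that the scalar amenability bounds must be lifted through the $\ell_2$ row norm. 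Where you genuinely diverge is the closing move. The paper proves a cone condition (its Lemma \ref{lem:4}: $\|\tilde\nu\|_{1,2}\le(4/\delta+2)\sqrt k\,\|\tilde\nu\|_F$), substitutes it so that the entire right-hand side of \eqref{eq:54} is nonpositive once $n\gtrsim\tfrac{\tau_1}{\alpha_1-\mu}k\log p$, and then uses $\lambda\|\tilde\theta\|_{1,2}-\lambda\langle\hat z,\tilde\theta\rangle\ge 0$ to force equality, whence strict dual feasibility gives $\tilde\theta_{S^c}=0$. You instead split $\|\tilde\nu\|_{1,2}^2\le 2k\|\tilde\nu\|_F^2+2\|\tilde\nu_{S^c}\|_{1,2}^2$, absorb the Frobenius part into the curvature, and derive the dichotomy $\tilde\nu_{S^c}=0$ or $\|\tilde\nu_{S^c}\|_{1,2}\ge\tfrac{\lambda n}{4\tau_1\log p}$, excluding the second branch by feasibility. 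Both mechanisms are sound in principle and yours dispenses with the separate cone-condition lemma, but the exclusion step is where your constants do not close as stated: with $c_\ell=\sqrt{q^{-1}\tau_1\alpha_2}$, $\tau_1=q$, $\tau_2=\sqrt q$ and $n\ge\tfrac{16}{\alpha_2^2}R^2\tau_2^2\log p$, one gets $\tfrac{\lambda n}{4\tau_1\log p}\ge R/\sqrt{\alpha_2}$, which exceeds the feasibility bound $\|\tilde\theta_{S^c}\|_{1,2}\le R$ only when $\alpha_2<1$, and $\alpha_2=\tfrac12\min_j\lambda_{\min}(\Sigma_x^{(j)})$ is not bounded by $1$ in general. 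This is a calibration loose end rather than a conceptual gap (and the paper's own constant tracking is comparably loose), but the paper's sign-based finish needs no lower bound on that threshold at all, so if you keep the dichotomy route you should either strengthen the sample-size hypothesis or conclude as the paper does, from nonnegativity of the left-hand side against nonpositivity of the right.
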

Recall that in Step 2 \eqref{eq:step2} we showed that the condition of Lemma \ref{lem:3} is satisfied when $C \sqrt{\frac{q\log p}{n}} \leq 1/2$, i.e. whenever $n \geq 4C^2 q \log p$.
Hence by strict convexity on the $\mathbb{R}^S$ space (Lemma \ref{lem:RSC}), $\hat{\theta}_S$ is the unique global optimum of the full objective \eqref{eq:nonconvex}.
\end{proof}


\subsection{Implication for Effect Estimation}
Since we recover the support $S$ with high probability, we can plug in the bound for the oracle estimate and obtain the following bounds for the linear estimates.

\begin{lemma}[Effect Estimation Error]\label{lem:esterrSparse}
Given the assumptions of Theorem \ref{thm:consistency}, with high probability the following hold for our estimator. For binary treatments ($C$ is a constant):
$|\widehat{ITE}(S) - {ITE(S)}| \leq 2C\|S\|_1 \sqrt{\frac{\log p}{n}}$, 
    $|\widehat{ATE} - {ATE}| \leq 2C\|\mu_S\|_1 \sqrt{\frac{\log p}{n}}$.
More generally, for $q$ possible treatments define $\tau(t) =E[Y|S, \mathrm{do}(T=t)] = \theta_{:t}^T S$. We have for all $t$
$|\hat{\tau}(t) - \tau(t)| \leq C' \|S\|_1 \sqrt{\frac{q\log p}{n}}$.
\end{lemma}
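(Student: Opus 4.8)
The plan is to combine the deterministic error-propagation bound of Lemma~\ref{lem:esterr} with the high-probability coefficient recovery guarantee of Theorem~\ref{thm:consistency}. The key observation is that Lemma~\ref{lem:esterr} expresses each effect-estimation error purely in terms of the coefficient estimation error $\|\hat\theta - \theta^\ast\|$ in an appropriate norm, and carries no probabilistic content of its own; the only randomness in the final statement therefore enters through the coefficient bound, making this an immediate corollary.

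First I would condition on the event, guaranteed by Theorem~\ref{thm:consistency} with probability at least $1 - c_1\exp(-c_2\min[k,\log p])$, on which the unique stationary point $\hat\theta$ satisfies $\mathrm{supp}(\hat\theta)=S$ and $\|\hat\theta-\theta^\ast\|_{\infty,\infty} \le c_3\sqrt{(\log p)/n}$, together with the companion bound $\|\hat\theta-\theta^\ast\|_{\infty,2}\le c_3\sqrt{(q\log p)/n}$ from~\eqref{eq:prenormbd}. Because exact support recovery holds on this event, both $\hat\theta$ and $\theta^\ast$ are supported on $S$, so the plug-in quantities $\widehat{ITE}(S)$, $\widehat{ATE}$, and $\hat\tau(t)$ are well defined and every inequality of Lemma~\ref{lem:esterr} applies verbatim.

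For the binary case I would bound the column-wise term $\sum_{t=0}^1 \|\hat\theta_{:t}-\theta_{:t}\|_\infty \le 2\|\hat\theta-\theta^\ast\|_{\infty,\infty} \le 2c_3\sqrt{(\log p)/n}$, using that each column's infinity norm is dominated by the entrywise maximum $\|\cdot\|_{\infty,\infty}$. Substituting into the two inequalities of Lemma~\ref{lem:esterr} and setting $C=c_3$ yields the stated $ITE$ and $ATE$ bounds directly. For the general $q$-treatment case, Lemma~\ref{lem:esterr} gives $|\hat\tau(t)-\tau(t)| \le \|S\|_1\|\hat\theta-\theta^\ast\|_{\infty,\infty}$; here I would instead invoke the per-row $2$-norm control, noting that for each column $\|\hat\theta_{:t}-\theta_{:t}\|_\infty \le \|\hat\theta-\theta^\ast\|_{\infty,2}\le c_3\sqrt{(q\log p)/n}$, which produces the claimed $C'\|S\|_1\sqrt{(q\log p)/n}$ rate.

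The only thing to be careful about is norm bookkeeping and consistency of the underlying event: all three bounds must be read off the same recovery event so that no extra union bound or probability loss is incurred. There is no genuine analytical obstacle, since the map from coefficient error to effect error is deterministic and linear. The one mild looseness is the factor $\sqrt{q}$ in the $q$-treatment bound, which arises from using the $\infty,2$ control rather than the tighter $\infty,\infty$ control; one could equally state the stronger $\sqrt{(\log p)/n}$ rate there.
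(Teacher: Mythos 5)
Your proposal is correct and matches the paper's (essentially one-line) argument: condition on the support-recovery and $\|\hat\theta-\theta^\ast\|_{\infty,\infty}$ event from Theorem~\ref{thm:consistency} and plug into the deterministic bounds of Lemma~\ref{lem:esterr}. Your side remark that the $q$-treatment bound could be tightened to $\sqrt{(\log p)/n}$ by using the $\infty,\infty$ control rather than the $\infty,2$ control is also accurate.
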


\begin{remark}[Comparisons]
Note we only lose a log factor relative to the oracle estimator. The comparison to a nonsparse estimator (one that sets $S = X$) depends on $\|S\|_1$, but for diffuse $X$ such that subsets $S$ typically have $\|S\|_1 = O_p(|S|)$, our estimator improves on the nonsparse estimator by a factor of $\frac{|S| \sqrt{\log p}}{p}$ which is significant for sparse $S$. 
\end{remark}

\begin{remark}[Application to nonlinear settings]
We note that our $S$ recovery algorithm is not limited to being used in conjunction with linear effect estimation. Our approach can be used to find a sparse $S$, and then any desired effect estimator can be applied to the data, controlling only for the set $S$.
\end{remark}

\section{Experimental Results}
\subsection{Synthetic Data}
We use synthetic data generated as follows. $X$ is generated from an isotropic Gaussian distribution. $T$ is generated by sampling from a multinomial distribution with probabilities given by $\mathrm{softmax}(\Phi^T X)$, where $\Phi\in \mathbb{R}^{p\times q}$ has i.i.d. Gaussian elements. The output $Y$ is then generated according to the linear model in Assumption \ref{ass:lin}, where the $k$ nonzero rows of $\theta$ have been sampled from an i.i.d Gaussian distribution. We choose $k =10$, 
and use the MCP penalty \cite{zhang2010nearly} as our nonconvex regularizer $\rho$. 
For binary treatments, i.e. $q=2$, Figure \ref{fig:binary} shows the probability of correctly recovering the set $S$ (with cardinality 10) as the total size $p$ of $X$ and the number of samples $n$ are varied. Note that the number of samples required for consistent recovery of $S$ depends approximately logarithmically on $p$, as predicted. 
\begin{figure}[!t]
	\centering
		\includegraphics[width=.9\linewidth]{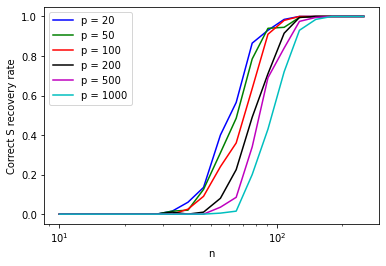}
    \caption{Empirical probability of our algorithm correctly recovering the sparse set $S$ as a function of $n$ and $p$, for binary actions ($q=2$).}
    \label{fig:binary}
\end{figure}
We next verify the benefits of using joint sparsity over a simple taking of the union of sparse subsets recovered independently for each value of $T$. Figure \ref{fig:multiary} compares our approach with the independent sparsity approach (also using nonconvex regularization) for $q=10$. Note that our algorithm significantly outperforms the independent sparsity approach.
Figure \ref{fig:multiary40} in the supplement shows results for $q=40$, showing that as $q$ increases, the number of samples required in fact decreases slightly (since in our experimental setup $\|\theta\|_{1,2}$ grows in expectation as $\sqrt{q}$). 
\begin{figure}[!t]
	\centering
		\includegraphics[width=\linewidth]{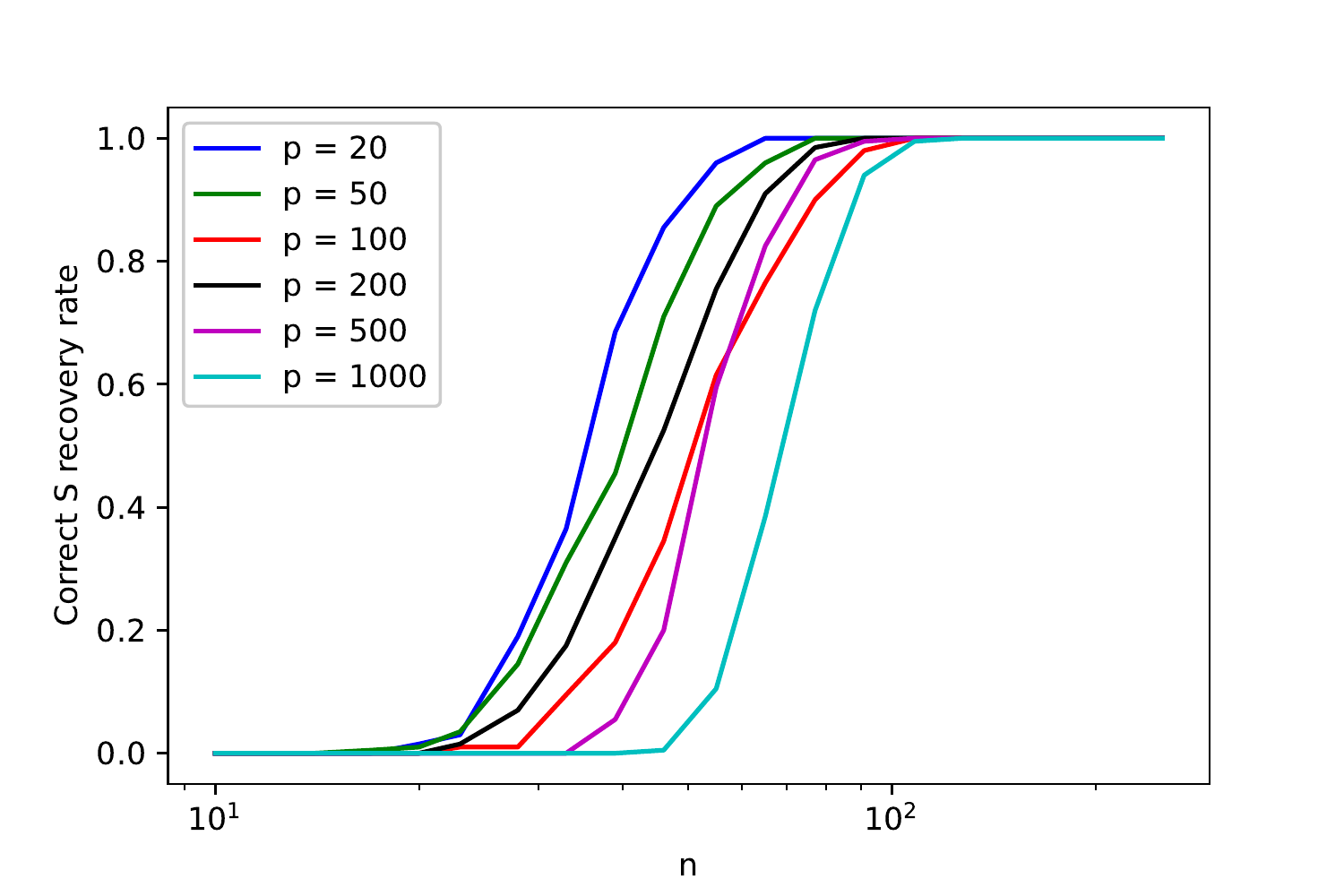}
		\vspace{-5mm}
		\includegraphics[width=\linewidth]{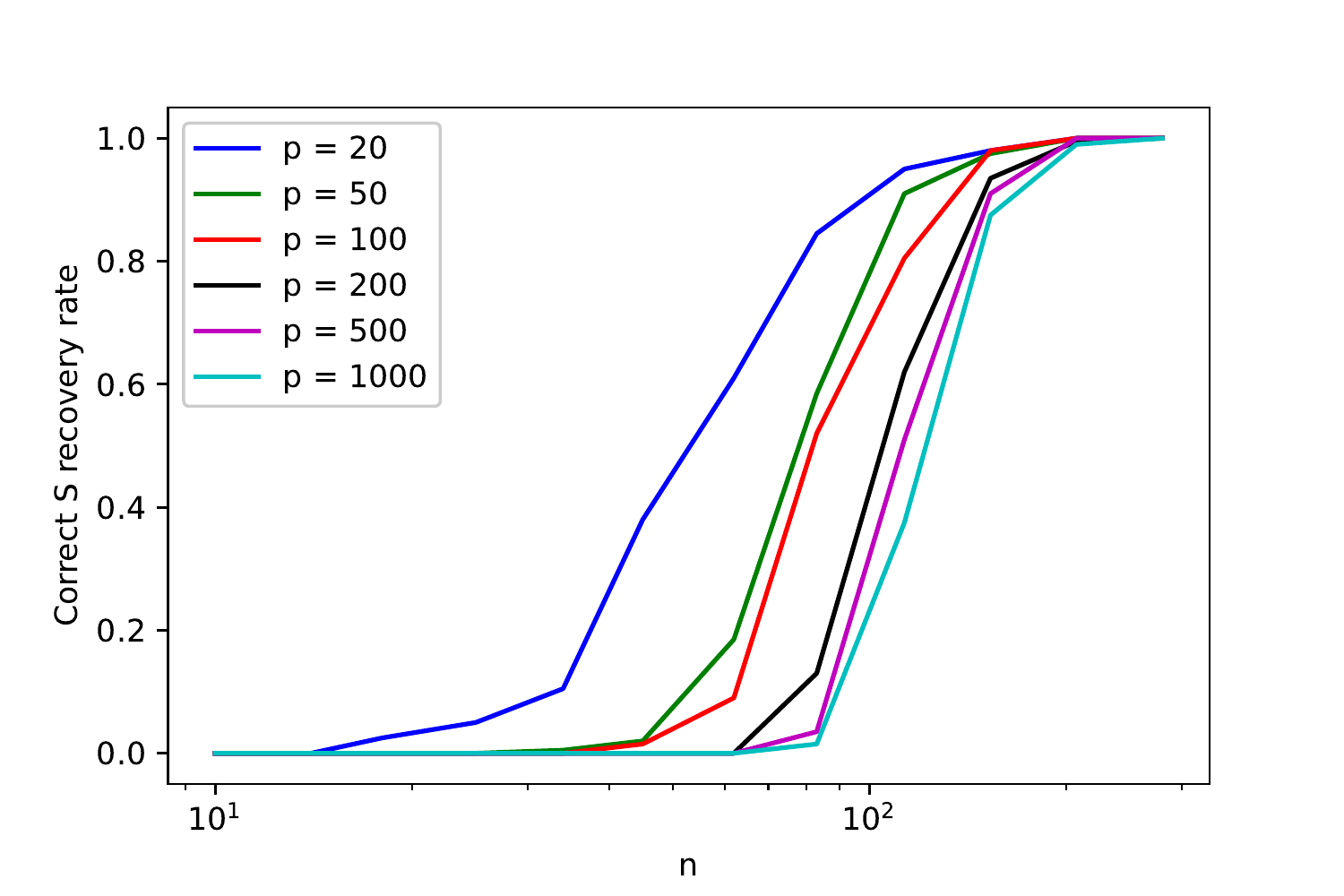}
    \caption{Empirical probability of our joint sparse algorithm (upper) and independent sparsity approach (lower) correctly recovering the sparse set $S$ as a function of $n$ and $p$, for $q=10$. }
    \label{fig:multiary}
\end{figure}


\subsection{Real Datasets}
\textbf{Cattaneo2: effect of smoking on birth weight.} This dataset\footnote{http://www.stata-press.com/data/r13/cattaneo2.dta} \cite{abadie2006large} studies the effect of maternal smoking on babies' birth weight in grams, and consists of 4642 singleton births in Pennsylvania, US. Actions are 0: no smoking (3778 samples), 1: 1-5 cigarettes daily (200 samples), 2: 6-10 cigarettes daily (337 samples), and 3: 11 or more cigarettes daily (327 samples). 20 covariates are included. Results comparing (nonsparse) doubly robust effect estimates \cite{causalevaluations} and the effect estimates obtained by using the doubly robust estimator on the sparse set $S$ obtained by our method are shown in Table \ref{tab:cattaneo}. We randomly split the data to have 20\% used for the $S$ estimation and 80\% used for the effect estimation. Our sparse approach is tuned via cross validation and on average yields a sparse $S$ of cardinality 10.9 (out of $|X| = 20$). Note that the sparse approach, unlike the full approach, yields a binary effect estimate consistent within the known empirical estimated interval \cite{abadie2006large}. For additional method comparisons for the binary effect, see Figure 5 in \cite{cheng2020sufficient} - only the dimensionality reduction method of \cite{cheng2020sufficient} provides an estimate in the empirical interval as we do.  

\begin{table}[htb]
\footnotesize
\centering
\begin{tabular}{|p{33mm}|p{20mm}|p{20mm}|}
\hline
 & {Nonsparse Doubly Robust Estimate} & {Sparse DR Estimate (Ours)}  \\\hline
Effect of 1 vs. 0 & -151.4g(21.3) & -195.0g(28.6)  \\\hline
Effect of 2 vs. 0 & -161.9g(16.6) & -264.0g(34.2) \\\hline
Effect of 3 vs. 0 & -189.2g(21.1) & -236.0g(26.6) \\\hline 
Binary effect($>0$ vs 0) & -162.4g(8.5) & -239.3g(10.8) \\\hline \end{tabular}
\caption{Estimated average treatment effects on Cattaneo2 dataset. Actions -- 0: no smoking, 1: 1-5 cigarettes daily, 2: 6-10 daily, and 3: 11 or more. For binary action effect, the empirical estimated interval is known to be  (-250g, -200g). Standard deviations over 20 random data splits are given in parentheses. }
\label{tab:cattaneo}
\end{table}

\textbf{IDHP: Effects of high-intensity care on low birth rate and premature infants.}
This semi-synthetic dataset\footnote{https://github.com/vdorie/npci} \cite{hill2011bayesian} consists of data on 25 covariates and an assigned treatment variable indicating whether the child was assigned to high-intensity care. Following the procedure in \cite{hill2011bayesian}, the treated and non-treated populations are biased and a response variable is generated according to the ``A'' scheme therein (which is designed to have sparse edges from $X$ to $Y$). Since the response is generated synthetically, the true ATE is known to be 4.36. Results for the non-sparse doubly robust estimator and the doubly robust estimator applied to the sparse $S$ recovered by our approach are shown in Table \ref{tab:idhp}. Both methods work reasonably well, with our sparse version outperforming (selecting on average $|S| =6.4$). The sparse performance is on par with the well-performing methods with results shown in Figure 2 of \cite{cheng2020sufficient}. 
\begin{table}[htb]
\footnotesize
\centering
\begin{tabular}{|p{6mm}|p{26mm}|p{30mm}|}
\hline
 & {Nonsparse Doubly \newline Robust Estimate} & {Sparse DR Estimate (Ours)}  \\\hline
ATE & 4.76(.51) & 4.49(.57)  \\\hline
\end{tabular}
\caption{Estimated average treatment effects on semi-synthetic IDHP dataset, with standard deviations over 20 random trials in parentheses. The true ATE is 4.36. }
\label{tab:idhp}
\end{table}

Additional details and real data experiments are in the supplement.

\section{Conclusion}
We considered using sparse regression to reduce the sample complexity of estimating causal effects in the presence of large numbers of covariates. We presented an algorithm based on joint-sparsity promoting nonconvex regularization, proved that it correctly recovers the sparse support $S$ with high probability, and tested it experimentally. 
In future work, we plan to use the power of the joint RSC concept to generalize our sparse estimator to more flexible nonlinear settings and to losses for categorical outcomes.

\clearpage
\bibliographystyle{apalike}
\bibliography{references}

\clearpage
\onecolumn
\hsize\textwidth
  \linewidth\hsize \toptitlebar {\centering
  {\Large\bfseries Supplementary Materials for: High-Dimensional Feature Selection for Sample Efficient Treatment Effect Estimation \par}}
 \bottomtitlebar \vskip 0.2in 

\section{Optimization algorithm}\label{app:optim}
The proximal gradient algorithm for optimizing our objective \eqref{eq:nonconvex} is shown in Algorithm \ref{alg:algor}, where we define (for $\theta \in \mathbb{R}^{p\times q}$)
\[
[\mathrm{Prox}_\lambda(\theta)]_{i:} = \theta_{i:} \max\left(0, 1 - \frac{\lambda}{\|\theta_{i:}\|_2}\right) ,\qquad i = 1,\dots, p,
\]
as the proximal operator for the L-1,2 norm.


\begin{algorithm}[h]
\caption{Proximal gradient descent for \eqref{eq:nonconvex}}
\label{alg:algor}
\begin{algorithmic}[1]
\STATE Input: matrices $\hat{\Gamma}^{(j)} = \frac{X_j^T X_j}{n}$, $\hat{\gamma}^{(j)} = \frac{X_j^T y_j}{n}$ for $j = 1,\dots, q$, regularizer $\rho_\lambda$ and associated $q'_{\lambda}(\cdot)$, backtracking constant $c \in (0,1)$, initial step size $\zeta_{0}$, norm constraint $R$, and initial iterate $\theta_0$.
\STATE $\theta \gets \theta_0$.
\WHILE{ not converged}

\FOR{$j = 1,\dots, q$}	
\STATE Compute the $j$th gradient $\nabla \bar{\mathcal{L}}_n(\theta_{:j}) = \hat{\Gamma}^{(j)} \theta_{:j} - \hat{\gamma}^{(j)} - \sum_{i=1}^p  \theta_{ij} \frac{q'_\lambda(\|\theta_{i:}\|_2)}{\|\theta_{i:}\|_2}$. 
\ENDFOR
\STATE \emph{Line search}: Let stepsize $\zeta_t$ be the largest element of $\{c^t \zeta_{0}\}_{t =1,\dots}$ such that 
\[
\|\mathrm{Prox}_{\lambda}(\theta - \zeta_t \nabla \bar{\mathcal{L}}_n(\theta)))\|_{1,2} < R.
\]
\STATE $\theta \gets \mathrm{Prox}_{\lambda}(\theta - \zeta_t \nabla \bar{\mathcal{L}}_n(\theta)))$.
\ENDWHILE
\STATE Return estimate $\theta$.
\end{algorithmic}
\end{algorithm}

\section{Additional Experiments}
\subsection{Synthetic experiments for $q=40$}
Figure \ref{fig:multiary40} shows results for $q=40$ following the setup in the main text.
\begin{figure}[ht]
	\centering
		\includegraphics[width=4in]{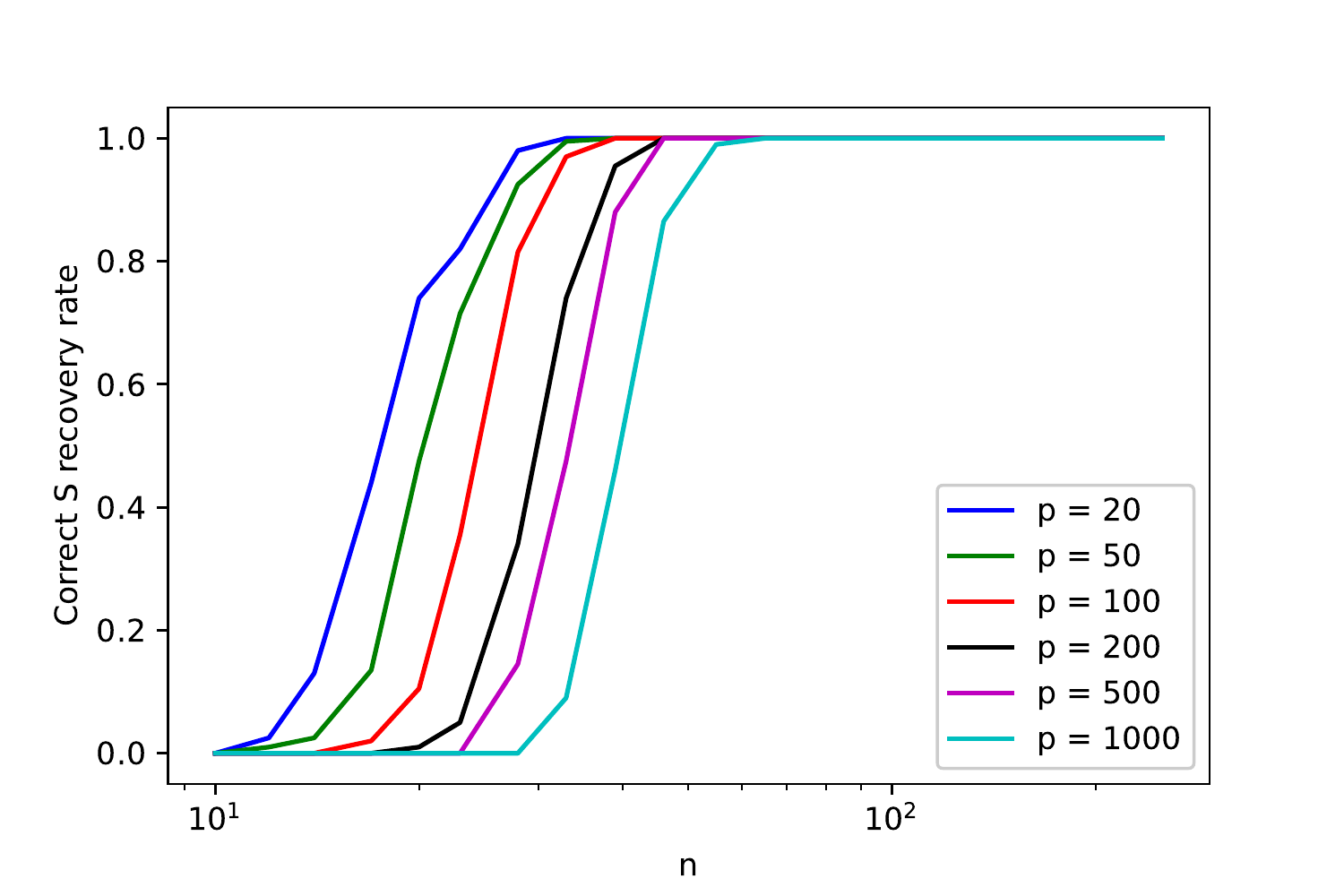}
    \caption{Empirical probability of our joint sparse algorithm (upper) and independent sparsity approach (lower) correctly recovering the sparse set $S$ as a function of $n$ and $p$, for $q=40$. }
    \label{fig:multiary40}
\end{figure}

\subsection{Additional real data experiments}
\paragraph{Cattaneo2.} The main text in Table \ref{tab:cattaneo} showed results for regularization parameter chosen via cross validation. We now consider robustness of the effect estimation to misspecification of $\lambda$. Table \ref{tab:cattaneo_sparse} shows results for $\lambda$ chosen too high (yielding very sparse $S$ with average $|S|$ of 3) and too low (yielding nonsparse $S$ with average $|S|$ of 15). Both estimates perform somewhat worse than the results in the main text, but still better than the nonsparse estimate (again shown in the main text), indicating that our approach still tends to select useful covariates.

\begin{table}
\begin{tabular}{|p{33mm}|p{20mm}|p{20mm}|}
\hline
 &  {Sparse DR Estimate (Ours, too sparse)} & {Sparse DR Estimate (Ours, less sparse)}  \\\hline
Effect of 1 vs. 0  & -217.1g(21.2) & -152.2g(24.7) \\\hline
Effect of 2 vs. 0  & -279.4g(17.4) & -195.6g(34.8)\\\hline
Effect of 3 vs. 0  & -302.9g(17.5) & -197.0g(34.0) \\\hline 
Binary effect($>0$ vs 0) &  -269.1g(14.7) & -194.7g(31.8) \\\hline \end{tabular}
\caption{Estimated average treatment effects on Cattaneo2 dataset, showing for larger regularization yielding sparser $S$ (on average cardinality of 3) and smaller regularization yielding less sparse $S$ (average cardinality of 15). Compare to Table \ref{tab:cattaneo} in the main text. Actions -- 0: no smoking, 1: 1-5 cigarettes daily, 2: 6-10 daily, and 3: 11 or more. For binary action effect, the empirical estimated interval is known to be  (-250g, -200g). Standard deviations over 20 random data splits are given in parentheses. }
\label{tab:cattaneo_sparse}
\end{table}

\paragraph{IDHP.} For the IDHP data, we know that $S$ is sparse (since the datset is semisynthetic), but we aren't told about the sparsity of the set $X_1 \cup X_2$. To answer this question, we used the doubly robust estimator with covariates selected as those 12 (out of 25) with the largest magnitude coefficients when regressing treatment $T$ versus $X$. The resulting treatment effect estimate was 5.61, with variance 0.623. This is actually not only worse than our approach, but worse than the nonsparse estimate as well (see main text).

\section{Proof of Lemma \ref{thm:admiss}}\label{app:admiss}
\begin{proof}
Chapter 11 of \cite{pearl2009causality} 
gives two sufficient conditions for strong ignorability and $c$-equivalence. If $A$ and $A'$ are two sets of covariates, then if either of
\begin{align*}
\mathrm{(a)}&\qquad T \perp A' | A,\quad \mathrm{and}\quad Y \perp A|T,A',\\
\mathrm{(b)}&\qquad T \perp A | A',\quad \mathrm{and}\quad Y \perp A'|T,A
\end{align*}
are satisfied, then $A'$ is $c$-equivalent to $A$ and we can replace $A$ with $A'$ in the treatment effect estimation.

Let us use the graph in Figure \ref{fig:graph} to check the $c$-equivalence of $S$ to $X$, using condition (a).
\begin{enumerate}
 \vspace{-1mm}
    \item $T\perp S | X$ immediately since $S$ is a subset of $X$.
     \vspace{-1mm}
    \item $Y \perp X | T, S$ holds since the graph indicates that $T,S$ form a Markov blanket for $Y$.
     \vspace{-1mm}
\end{enumerate}
We also verify it for $X_1 \cup X_2$, using condition (b):
\begin{enumerate}
 \vspace{-1mm}
    \item $T\perp X | (X_1 \cup X_2)$ holds since the graph indicates that $T,X_1\cup X_2$ form a Markov blanket for $T$.
     \vspace{-1mm}
    \item $Y \perp (X_1 \cup X_2) | T, X$ immediately since $X_1 \cup X_2$ is a subset of $X$.
     \vspace{-1mm}
\end{enumerate}
\end{proof}

\section{Proof of Lemma \ref{lem:stepOne}}\label{app:stepOne}
We first state the following lemma, which allows us to use the first of the two joint RSC conditions.
\begin{lemma}\label{lem:normLessOne}
Suppose $\hat{\theta}$ is a zero subgradient point of the objective \eqref{eq:objfun} supported on $S$, i.e.
\begin{equation}\label{eq:zerosubFirst}
\nabla \mathcal{L}_n(\hat{\theta}_S) + \nabla \rho_\lambda(\hat{\theta}_S) = 0.
\end{equation}
Let $\tilde{\nu} := \hat{\theta} - \theta^*$. Then $\|\tilde{\nu}\|_F \leq 1$. 
\end{lemma}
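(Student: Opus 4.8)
The plan is to run the standard contradiction argument that places the oracle error $\tilde{\nu} = \hat{\theta} - \theta^*$ in the quadratic (small-$\|\Delta\|_F$) regime of the joint RSC condition \eqref{eq:RSC}. First I would record two elementary structural facts. Since both $\hat{\theta}$ and $\theta^*$ are supported on $S$, so is $\tilde{\nu}$; and since $\hat{\theta}$ is feasible for \eqref{eq:objfun} we have $\|\hat{\theta}\|_{1,2} \leq R$, which together with $\|\theta^*\|_{1,2} < R/2$ gives $\|\tilde{\nu}\|_{1,2} \leq \frac{3R}{2}$ by the triangle inequality. These are the only geometric inputs about $\tilde{\nu}$ that I expect to need.

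Next I would study the inner product $\langle \nabla\mathcal{L}_n(\hat{\theta}) - \nabla\mathcal{L}_n(\theta^*), \tilde{\nu}\rangle$, which the joint RSC condition lower-bounds on taking $\theta = \theta^*$ and $\Delta = \tilde{\nu}$. Assume for contradiction that $\|\tilde{\nu}\|_F \geq 1$, so the second branch of \eqref{eq:RSC} applies and the inner product is at least $\alpha_2\|\tilde{\nu}\|_F - \tau_2\sqrt{\log p / n}\,\|\tilde{\nu}\|_{1,2}$. To upper-bound the same inner product I would invoke the zero-subgradient hypothesis \eqref{eq:zerosubFirst}, which gives $\nabla\mathcal{L}_n(\hat{\theta}) = -\nabla\rho_\lambda(\hat{\theta})$, so the inner product equals $\langle -\nabla\rho_\lambda(\hat{\theta}) - \nabla\mathcal{L}_n(\theta^*), \tilde{\nu}\rangle$. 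Applying H\"older's inequality for the dual pair $(\|\cdot\|_{1,2}, \|\cdot\|_{\infty,2})$ bounds this by $\big(\|\nabla\rho_\lambda(\hat{\theta})\|_{\infty,2} + \|\nabla\mathcal{L}_n(\theta^*)\|_{\infty,2}\big)\|\tilde{\nu}\|_{1,2}$. Here the amenability of $\rho_\lambda$ (Definition \ref{def:amen}, via $\rho_\lambda(t)/t$ nonincreasing and $\lim_{t\to 0^+}\rho'_\lambda(t) = \lambda$) forces $|\rho'_\lambda| \leq \lambda$, so every nonzero row of $\nabla\rho_\lambda(\hat{\theta})$ has $2$-norm at most $\lambda$ and hence $\|\nabla\rho_\lambda(\hat{\theta})\|_{\infty,2} \leq \lambda$; meanwhile \eqref{eq:nabla} controls $\|\nabla\mathcal{L}_n(\theta^*)\|_{\infty,2} \leq c'\sqrt{q\log p/n}$.

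Combining the two bounds and cancelling gives $\alpha_2\|\tilde{\nu}\|_F \leq \big(\lambda + c'\sqrt{q\log p/n} + \tau_2\sqrt{\log p/n}\big)\|\tilde{\nu}\|_{1,2}$, and substituting $\|\tilde{\nu}\|_{1,2} \leq \frac{3R}{2}$ isolates $\|\tilde{\nu}\|_F \leq \frac{3R}{2\alpha_2}\big(\lambda + c'\sqrt{q\log p/n} + \sqrt{q}\sqrt{\log p/n}\big)$. The final step is to verify that this right-hand side is strictly below $1$ under the theorem's hypotheses: the first term is $O(R\lambda) = O(\sqrt{q}\,c_u)$ via $\lambda \leq c_u\sqrt{q}/R$, and each remaining term is $O(1/\sqrt{C})$ via $n \geq C R^2 q\log p$ (using $\tau_2 = \sqrt{q}$, so that $R\sqrt{q}\sqrt{\log p/n} = O(1/\sqrt{C})$). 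Choosing the constant $c_u$ small enough and $C$ large enough drives the whole expression below $1$, contradicting $\|\tilde{\nu}\|_F \geq 1$, so $\|\tilde{\nu}\|_F \leq 1$. I expect the delicate part to be exactly this constant calibration --- in particular tracking the $q$-dependence so the bound survives as $q$ grows --- rather than the inequalities themselves, which become routine once the amenability bound $\|\nabla\rho_\lambda(\hat{\theta})\|_{\infty,2} \leq \lambda$ is in hand.
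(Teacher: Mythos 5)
Your proposal is correct and follows essentially the same route as the paper's proof in Section \ref{app:lemnormLessOne}: contradiction via the second branch of joint RSC, substitution of the zero-subgradient condition, the H\"older bound \eqref{eq:Holder} with $\|\nabla\rho_\lambda(\hat\theta)\|_{\infty,2}\leq\lambda$ and \eqref{eq:nabla}, then calibration of $c_u$ and $C$ (the paper uses $\|\tilde\nu\|_{1,2}\leq 2R$ where you use $3R/2$, an immaterial difference). The $\sqrt{q}$-dependence you flag in the term $R\lambda\leq c_u\sqrt{q}$ is a real wrinkle, but it is present in the paper as well, whose proof silently uses $\lambda\leq c_u/R$ at this step.
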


Lemma \ref{lem:normLessOne} implies that $\|\hat{\theta}_S - \theta^*_S\|_{F} \leq 1$. Hence the first joint RSC condition \eqref{eq:RSC} applies, so we have
\begin{equation}
\langle \nabla \mathcal{L}(\hat{\theta}_S) - \nabla \mathcal{L}(\theta^*_S), \tilde{\nu} \rangle \geq\alpha_1 \|\tilde{\nu}\|_F^2 - \tau_1{ \frac{\log k}{n}} \|\tilde{\nu}\|_{1,2}^2. 
\label{eq:21b}
\end{equation}
We also have, by the convexity of $\rho_\lambda(\theta) + \mu/2 \|\theta\|_F^2$ implied by the $\mu$-amenability of $\rho_\lambda$, that
\begin{equation}
\langle \nabla \rho_\lambda(\hat{\theta}_S), \theta^\ast_S - \hat{\theta_S}\rangle \leq \rho_\lambda(\theta^\ast_S) - \rho_\lambda(\hat{\theta}_S) + \frac{\mu}{2} \|\tilde{\nu}\|_F^2.
\label{eq:22}
\end{equation}
We know that since $\hat{\theta}$ is a stationary point, $\langle \mathcal{L}_n(\hat{\theta}_S) + \nabla \rho_\lambda(\hat{\theta}_S) , \theta_S - \hat{\theta}_S\rangle \geq 0$ for all feasible $\theta$. Using this fact with \eqref{eq:21b} and \eqref{eq:22} yields
\begin{align}
&(\alpha_1 - \mu/2) \|\tilde{\nu}\|_F^2  \nonumber\\&\leq - \langle \nabla \mathcal{L}_n(\theta^\ast_S), \tilde{\nu}\rangle + \rho_\lambda(\theta^\ast_S) - \rho_\lambda(\hat{\theta}_S)+ \tau_1 \frac{\log k}{n} \|\tilde{\nu}\|_{1,2}^2\nonumber\\
&\leq \rho_\lambda(\theta^\ast_S) - \rho_\lambda(\hat{\theta}_S) + \left(\|\nabla \mathcal{L}_n(\theta^\ast_S)\|_{\infty,2} +R\tau_1 \frac{\log k}{n}\right)\|\tilde{\nu}\|_{1,2},\label{eq:23new}
\end{align}
where we have again applied \eqref{eq:Holder}.

Now by \eqref{eq:nabla} and the fact that $\tau_1 =q$ by Lemma \ref{lem:RSC}, we have 
\begin{align}
\|\nabla& \mathcal{L}_n(\theta^\ast_S)\|_{\infty,2} +R\tau_1 \frac{\log k}{n} \nonumber\\\nonumber
&\leq c' \sqrt{\frac{q \log p}{n}} + \sqrt{\frac{R^2 q \log k }{n}} \sqrt{\frac{q \log p}{n}}\\
&\leq \frac{\lambda}{2} + \frac{\lambda }{2}
= \lambda,\label{eq:lmdabd}
\end{align}
where we have used the assumptions that $\lambda \geq c_\ell \sqrt{\frac{q \log p}{n}}$ and $n \geq C R^2 q \log p$ where here we require $c_\ell \geq 2c'$ and $C \geq \frac{1}{4c^2_\ell}$.

Also recall that the definition of $(\mu,\gamma)$ amenability states that the function $\rho_\lambda + \frac{\mu t^2}{2}$ is convex over the real line, $\lim_{t\rightarrow 0^+}\rho'_\lambda(t) = \lambda$, and $\rho_\lambda$ is symmetric about 0. Combining these facts implies that for scalar $t$, $\lambda |t| \leq \rho_\lambda(t) + \frac{\mu t^2}{2}$. This in turn implies by substitution that $\lambda \|\theta\|_{1,2} \leq \rho_\lambda(\theta) + \frac{\mu \|\theta\|_{F}^2}{2}$.

We use this fact, the subadditivity of $\rho_\lambda$ (implied by the condition that $\frac{\rho_\lambda(t)}{t}$ is nonincreasing on $\mathbb{R}^+$), and the inequality \eqref{eq:lmdabd} to simplify \eqref{eq:23new} as
\begin{align*}
&(\alpha_1 - \mu/2) \|\tilde{\nu}\|_F^2 \leq \rho_\lambda(\theta^\ast_S) - \rho_\lambda(\hat{\theta}_S) + {\lambda} \|\tilde{\nu}\|_{1,2}\\
&\leq \rho_\lambda(\theta^\ast_S) - \rho_\lambda(\hat{\theta}_S) + {\lambda} \left(\rho_\lambda(\tilde{\nu})/\lambda + \frac{\mu}{2\lambda} \|\tilde{\nu}\|_F^2\right)\\
&\leq \rho_\lambda(\theta^\ast_S) - \rho_\lambda(\hat{\theta}_S) + {\lambda} \left((\rho_\lambda(\hat{\theta}_S) + \rho_\lambda(\theta_S^\ast))/\lambda + \frac{\mu}{2\lambda} \|\tilde{\nu}\|_F^2\right)\\
&= 2 \rho_\lambda(\theta^\ast_S) + \frac{\mu}{2} \|\tilde{\nu}\|_F^2,
\end{align*}
hence $0 \leq (\alpha_1 - \mu)\|\tilde{\nu}\|_F^2 \leq 2 \rho_\lambda(\theta^\ast_S) \leq 2 \lambda \|\theta_S^\ast\|_{1,2} \leq R\lambda$, implying that 
\[
\|\tilde{\nu}\|_F \leq \sqrt{\frac{R\lambda}{\alpha_1 - \mu}}
\]
and thus via a norm inequality
\[
\|\tilde{\nu}\|_{1,2} \leq \sqrt{\frac{Rk\lambda}{\alpha_1 - \mu}}.
\]


By the triangle inequality we then have
\[
\|\hat{\theta}_S\|_{1,2} \leq \|\theta^\ast\|_{1,2} + \|\hat{\theta}_S - \theta^\ast_S\|_{1,2} \leq \frac{R}{2} + \sqrt{\frac{Rk\lambda}{\alpha_1 - \mu}} < R.
\]
where the last inequality follows by the fact that $R > \frac{4 k\lambda}{\alpha_1 - \mu}$ under our assumptions.\qed

\section{Proof of Lemma \ref{lem:RSC}}\label{app:lemRSC}

By the proof of Corollary 1 in \cite{loh2015regularized} and using the fact that our loss function \eqref{eq:lossfunc} decouples across columns, we have that with probability at least $1 - q c_1 \exp(-cn)$ and $n \geq O(k \log p)$, 
\begin{equation*}
\langle \nabla \mathcal{L}(\theta + \Delta) - \nabla \mathcal{L}(\theta), \Delta \rangle \geq  \frac{1}{2}\min_j(\lambda_{\min}(\Sigma_j)) \|\Delta\|_F^2 -  \frac{\log p}{n} \sum_j\|\Delta_{:j}\|_{1}^2. 
\end{equation*}

We require the following lemma.
\begin{lemma}\label{lem:normIneq}
For $A \in \mathbb{R}^{p\times q}$, $\|A\|_{1,2} \geq \frac{1}{\sqrt{q}}\|A^T\|_{2,1}$. 
\end{lemma}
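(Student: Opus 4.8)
The plan is to chain the two sides of the inequality together through the ``flat'' entrywise $\ell_1$ norm $\|A\|_{1,1} = \sum_{i=1}^p\sum_{j=1}^q |a_{ij}|$, which in the paper's notation is just the $(1,1)$ case of $\|\cdot\|_{a,b}$. Unwinding the definitions, $\|A\|_{1,2} = \sum_{i=1}^p \|A_{i:}\|_2$ is a sum over the $p$ rows, while $\|A^T\|_{2,1}$ is the $\ell_2$ norm of the $q$-vector $w := (\|A_{:1}\|_1,\dots,\|A_{:q}\|_1)$ whose entries are the $\ell_1$ norms of the columns of $A$ (since the rows of $A^T$ are the columns of $A$). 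I would establish $\|A\|_{1,2} \geq \frac{1}{\sqrt q}\|A\|_{1,1}$ and $\|A^T\|_{2,1} \leq \|A\|_{1,1}$ separately, and then combine them.

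For the first bound I would apply Cauchy--Schwarz row by row: each row $A_{i:}$ lives in $\mathbb{R}^q$, so $\|A_{i:}\|_1 \leq \sqrt q\,\|A_{i:}\|_2$. Summing over $i$ gives $\|A\|_{1,1} = \sum_i \|A_{i:}\|_1 \leq \sqrt q \sum_i \|A_{i:}\|_2 = \sqrt q\,\|A\|_{1,2}$, which rearranges to the claim. For the second bound I would use the elementary inequality $\|w\|_2 \leq \|w\|_1$, valid for any vector: applied to the vector $w$ above this yields $\|A^T\|_{2,1} = \|w\|_2 \leq \|w\|_1 = \sum_{j=1}^q \|A_{:j}\|_1 = \|A\|_{1,1}$.

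Combining the two gives $\|A\|_{1,2} \geq \frac{1}{\sqrt q}\|A\|_{1,1} \geq \frac{1}{\sqrt q}\|A^T\|_{2,1}$, as desired. There is no genuine obstacle here; the only thing to get right is the bookkeeping of the mixed-norm indices (which axis is aggregated in $\ell_2$ versus in $\ell_1$), together with the observation that the factor $\sqrt q$ is precisely the cost of the row-wise passage from $\ell_1$ to $\ell_2$ across the $q$ columns. Choosing $\|A\|_{1,1}$ as the pivot quantity is what makes both halves reduce to one-line scalar norm comparisons.
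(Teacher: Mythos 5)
Your proof is correct: both links in the chain $\|A^T\|_{2,1} \le \|A\|_{1,1} \le \sqrt{q}\,\|A\|_{1,2}$ are valid ($\ell_2 \le \ell_1$ applied to the $q$-vector of column $\ell_1$-norms, and Cauchy--Schwarz on each length-$q$ row), and your bookkeeping of which axis carries which norm matches the paper's convention $\|A\|_{a,b}^a = \sum_i \|A_{i:}\|_b^a$. The paper takes a different pivot, chaining through the Frobenius norm via $\|A\|_{2,1} \le \sqrt{q}\,\|A\|_F \le \sqrt{q}\,\|A\|_{1,2}$. Note that, as displayed, that chain bounds $\|A\|_{2,1}$ rather than the transposed quantity $\|A^T\|_{2,1}$ appearing in the lemma statement (and the transposed analogue of its first step, $\|A^T\|_{2,1} \le \sqrt{p}\,\|A\|_F$, would produce a factor $\sqrt{p}$ instead of $\sqrt{q}$, which is what the application in Lemma \ref{lem:RSC} actually needs to avoid); your route through the entrywise $\ell_1$ norm handles the transpose cleanly and is arguably the more careful argument. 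As a side remark, the triangle inequality in $\mathbb{R}^q$ applied to $\sum_i v^{(i)}$ with $v^{(i)}_j = |a_{ij}|$ gives the stronger bound $\|A^T\|_{2,1} \le \|A\|_{1,2}$ with no $\sqrt{q}$ at all (tight for the all-ones matrix); the weaker constant in the lemma is harmless, since in Lemma \ref{lem:RSC} the resulting factor $q$ is simply absorbed into $\tau_1$.
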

\begin{proof}
We have $\|A\|_{1,2} = \sum_i \|A_{i:}\|_2$ and $\|A^T\|_{2,1} = \sqrt{\sum_j \|A_{:j}\|_1^2}$. Note that
\[
\|A\|_{2,1} \leq \sqrt{q}\|A\|_F \leq \sqrt{q} \|A\|_{1,2}.
\]
\end{proof}

Applying Lemma \ref{lem:normIneq}, we have
\begin{equation*}
\langle \nabla \mathcal{L}(\theta + \Delta) - \nabla \mathcal{L}(\theta), \Delta \rangle \geq  \frac{1}{2}\min_j(\lambda_{\min}(\Sigma_j)) \|\Delta\|_F^2 -  \frac{q \log p}{n} \|\Delta\|_{1,2}^2,
\end{equation*}
as desired for the $\|\Delta\|_{F} \leq 1$ case.

If $\|\Delta\|_{F} \geq 1$, then by the constraint $\|\Delta\|_{1,2} \leq R$ and assumption $n \geq 4R^2 q\log p$ we have 
\[
\frac{1}{2}\min_j(\lambda_{\min}(\Sigma_j)) \|\Delta\|_F^2 -  \frac{{q} \log p}{n} \|\Delta\|_{1,2}^2 \geq \frac{1}{2}\min_j(\lambda_{\min}(\Sigma_j)) \|\Delta\|_F - \sqrt{ \frac{q\log p}{n}} \|\Delta\|_{1,2}.
\]

Moving onto the second part of the lemma, we have (since $\mathcal{L}_n$ is the least squares loss) that
\[
\nabla^2 \mathcal{L}_n(\theta) = \mathrm{diag}\left(\left\{\frac{X_j^T X_j}{n}\right\}_{j=1}^q\right),
\]
where $\mathrm{diag}$ indicates the block diagonal matrix formed with the given blocks. Now since the $X_j$ are subgaussian with covariance $\Sigma^{(j)}_x$, we have that (Proposition 2.1 of \cite{vershynin2012close}) 
\[
|||((1/n)[X_j^T X_j]_{SS}) - ([\Sigma^{(j)}_x]_{SS})|||_2 \leq  |||\Sigma^{(j)}_x|||_2 \sqrt{\frac{k \log p}{n}}
\]
with probability at least $1 - c_1 \exp(- c_2 \log p)$. Since we have assumed that $\lambda_{\min}([\Sigma^{(j)}_x]_{SS}) > 2\mu$, we therefore have
\[
\lambda_{\min}(([X_j^T X_j]_{SS}/n) \geq 2 \mu - \mu > \mu
\]
for $n > \frac{k \log p |||\Sigma^{(j)}_x|||_2^2}{\mu^2}$.

With the union bound we thus have that the function $\mathcal{L}_n(\theta_S) - \frac{\mu}{2} \|\theta_S\|_F^2$ is strictly convex with probability at least $1 - c_1 q\exp(- c_2 \log p)$. By the definition of $(\mu,\gamma)$ amenability, we known that $\rho_\lambda - \frac{\mu}{2}t^2$ is convex. Since the addition of a strictly convex function and a convex function is strictly convex, the lemma results.
\qed

\section{Proof of Lemma \ref{lem:normLessOne}}\label{app:lemnormLessOne}
Suppose $\|\tilde{\nu}\|_{F} > 1$. Then by joint RSC \eqref{eq:RSC} we have
\[
\langle \nabla \mathcal{L}(\hat{\theta}) - \nabla \mathcal{L}(\theta^*), \tilde{\nu} \rangle \geq\alpha_2 \|\tilde{\nu}\|_F - \tau_2\sqrt{ \frac{\log p}{n}} \|\tilde{\nu}\|_{1,2}. 
\]
Since $\hat{\theta}$ is a stationary point, $\nabla \mathcal{L}(\hat{\theta}) + \nabla \rho_\lambda(\hat{\theta}) = 0$ and we thus have
\begin{equation}
\langle -\nabla \rho_\lambda(\hat{\theta}) - \nabla \mathcal{L}(\theta^*), \tilde{\nu} \rangle \geq\alpha_2 \|\tilde{\nu}\|_F - \tau_2\sqrt{ \frac{\log p}{n}} \|\tilde{\nu}\|_{1,2}. 
\label{eq:20}
\end{equation}
Recall that for equal sized matrices $A,B$
\begin{align}
\nonumber \langle A,B\rangle  
&= \sum_{i} \langle A_{i:} , B_{i:} \rangle\\
\nonumber&\leq \sum_i \|A_{i:}\|_2 \|B_{i:}\|_2\\
\nonumber&\leq \left(\max_i \|A_{i:}\|_2\right) \left(\sum_i \|B_{i:}\|_2\right)\\
&= \|A\|_{\infty,2} \|B\|_{1,2},
\label{eq:Holder}
\end{align}
where for both inequalities we have applied Holder's inequality. We can then write
\begin{equation}
\langle -\nabla \rho_\lambda(\hat{\theta}) - \nabla \mathcal{L}(\theta^*), \tilde{\nu} \rangle \leq \left( \|\nabla \rho_\lambda(\hat{\theta})\|_{\infty,2} + \|\nabla \mathcal{L}(\theta^*)\|_{\infty,2} \right) \|\tilde{\nu}\|_{1,2} \leq \left(\lambda + \lambda/2\right)\|\tilde{\nu}\|_{1,2},
\label{eq:21}
\end{equation}
where the last inequality follows from the definition of $\rho_\lambda$ and applying \eqref{eq:nabla} in the main text that yields $\|\nabla \mathcal{L}(\theta^*)\|_{\infty,2} \leq \lambda/2$ when $c_\ell \geq 2c'$. 

Combining \eqref{eq:21} with \eqref{eq:20} yields 
\begin{align*}
\alpha_2 \|\tilde{\nu}\|_F - \tau_2\sqrt{ \frac{\log p}{n}} \|\tilde{\nu}\|_{1,2} &\leq 1.5\lambda \|\tilde{\nu}\|_{1,2},\\
\|\tilde{\nu}\|_F &\leq \frac{\|\tilde{\nu}\|_{1,2}}{\alpha_2} \left(1.5\lambda + \tau_2\sqrt{ \frac{\log p}{n}}\right) \\&\leq \frac{2R}{\alpha_2} \left(1.5\lambda + \tau_2\sqrt{ \frac{\log p}{n}}\right)\\
&\leq \frac{2R}{\alpha_2} \left(1.5\frac{c_u}{R} + \tau_2\sqrt{ \frac{\log p}{n}}\right).
\end{align*}
Note that the right hand side is $\leq 1$ when $c_u$ is chosen satisfying $c_u \geq \frac{\alpha_2}{6}$ and $n \geq \frac{16}{\alpha_2^2} R^2\tau_2^2 \log p$ (since $\tau_2 = \sqrt{q}$, corresponds to having $C \geq \frac{16}{\alpha_2^2}$ in the statement of Theorem \ref{thm:consistency}), yielding a contradiction with our earlier assumption. \qed

\section{Proof of Lemma \ref{lem:5}}\label{app:lem5}
We have for all $i \in S$
\[
\|\hat{\theta}_{i:}\|_2 \geq \|\theta^\ast_{i:}\|_2 - |\langle\hat{\theta}_{i:} - \theta^\ast_{i:}, \theta^\ast_{i:}/ \|\theta^\ast_{i:}\|_2\rangle|. 
\]
Now by an easy extension of the argument in Appendix D.1.1 of \cite{loh2017support}, we have that
\[
\max_i |\langle\hat{\theta}_{i:} - \theta^\ast_{i:}, \theta^\ast_{i:}/ \|\theta^\ast_{i:}\|_2\rangle| \leq c_3 \sqrt{\frac{\log p}{n}}
\]
with probability at least $1 - c_1 \exp (-c_2 \min(k,\log p))$. We then have
\[
\|\hat{\theta}_{i:}\|_2 \geq \lambda \gamma + c_3 \sqrt{\frac{\log p}{n}} - c_3 \sqrt{\frac{\log p}{n}} = \lambda \gamma.
\]
Recall that by Definition \ref{def:amen} of $(\mu,\gamma)$ amenability, we have that $\rho'_\lambda(t) = 0$ for all $t \geq \gamma \lambda$. \qed

\section{Proof of Lemma \ref{lem:3}}\label{app:lem3}

Define $\tilde{\nu} = \tilde \theta - \hat \theta$, where recall $\hat \theta$ is the oracle estimate \eqref{eq:objfun}. We will show that $\|\tilde{\nu}\|_F\leq 1$. By contradiction, suppose that $\|\tilde{\nu}\|_F> 1$. Then by the RSC condition \eqref{eq:RSC}
\[
\langle \nabla \mathcal{L}_n (\tilde \theta) - \nabla \mathcal{L}_n(\hat \theta)\rangle \geq \alpha_2 \|\tilde\nu\|_F - \tau_2 \sqrt{\frac{\log p}{n}}\|\tilde \nu\|_{1,2}.
\]
Since both $\hat{\theta}$ and $\tilde \theta$ are stationary points and $\hat{\theta}$ is an interior local minimum (by Step 2), we have
\begin{align*}
    \langle \nabla \mathcal{L}_n(\tilde \theta) + \nabla \rho_\lambda(\tilde{\theta}), \hat{\theta} - \tilde{\theta}\rangle &\geq 0\\
    \nabla \mathcal{L}_n(\hat \theta) + \nabla \rho_\lambda(\hat{\theta}) = 0.
\end{align*}
Combining inequalities yields
\begin{align*}
    \alpha_2 \|\tilde\nu\|_F - \tau_2 \sqrt{\frac{\log p}{n}}\|\tilde \nu\|_{1,2} &\leq \langle - \nabla \mathcal{L}_n(\hat \theta) + \nabla \rho_\lambda(\tilde \theta), \tilde \nu\rangle\\
    &=\langle \nabla \rho_\lambda(\hat \theta) + \nabla \rho_\lambda(\tilde \theta), \tilde \nu\rangle\\
    &\leq (\|\nabla \rho_\lambda(\hat \theta)\|_{\infty,2} + \|\nabla \rho_\lambda(\tilde \theta)\|_{\infty,2})\|\tilde\nu\|_{1,2},
\end{align*}
where we have applied the norm inequality \eqref{eq:Holder}. Recall that by $(\mu,\gamma)$-amenability (see Lemma 8 of \cite{loh2017support}) $\|\nabla \rho_\lambda( \theta)\|_{\infty,2} \leq \lambda$ for any $\theta$. Hence we can rearrange and obtain
\[
\|\tilde\nu\|_F \leq \frac{\|\tilde \nu\|_{1,2}}{\alpha_2} \left(2\lambda + \tau_2 \sqrt{\frac{\log p }{n}}\right) \leq \frac{2R}{\alpha_2 }\left(2\lambda + \tau_2 \sqrt{\frac{\log p }{n}}\right)
\]
due to the norm constraint on the objective \eqref{eq:nonconvex}. Since we have assumed $\lambda \leq \frac{\alpha_2}{8R}$ and $n \geq \frac{16}{\alpha^2_2} R^2 \tau_2^2\log p$, $\|\tilde{\nu}\|_F\leq 1$ as desired.

We can then apply the appropriate RSC condition from \eqref{eq:RSC} yielding
\[
\langle \nabla \mathcal{L}_n (\tilde \beta) - \nabla \mathcal{L}_n (\hat \beta),\tilde \nu \rangle \geq \alpha_1 \|\tilde \nu \|_2^2  - \tau_1 \frac{\log p}{n} \|\tilde \nu\|_{1,2}^2,
\]
and (recalling the definition of $\bar{\mathcal{L}}_n$ from \eqref{eq:shifted})
\begin{equation}\label{eq:52}
\langle \nabla \bar{\mathcal{L}}_n (\tilde \beta) - \nabla \bar{\mathcal{L}}_n (\hat \beta),\tilde \nu \rangle \geq (\alpha_1 - \mu) \|\tilde \nu \|_2^2  - \tau_1 \frac{\log p}{n} \|\tilde \nu\|_{1,2}^2.
\end{equation}
By the first order optimality conditions we have
\begin{align*}
\langle \nabla \bar{\mathcal{L}}_n (\tilde \theta), \hat \theta - \tilde \theta \rangle + \lambda \langle \tilde z , \hat\theta - \tilde\theta\rangle &= 0,\\
\langle \nabla \bar{\mathcal{L}}_n (\hat \theta),  \tilde\theta - \hat \theta \rangle + \lambda \langle \hat z , \tilde\theta - \hat\theta\rangle &= 0,
\end{align*}
where $\tilde z \in \partial \|\tilde\theta\|_{1,2}$.
Combining these and using the definition of subgradient yields
\begin{align}
&\langle \nabla \bar{\mathcal{L}}_n (\hat \theta) - \nabla \bar{\mathcal{L}}_n (\tilde \theta),  \tilde\theta - \hat \theta \rangle + \lambda \langle \hat z , \tilde\theta\rangle - \lambda \|\hat\theta\|_{1,2} + \lambda \langle \tilde z , \hat\theta\rangle - \lambda \|\tilde\theta\|_{1,2} \geq 0,\nonumber\\\nonumber
& \lambda \|\tilde\theta\|_{1,2} -\lambda \langle \hat z , \tilde\theta\rangle \leq \langle \nabla \bar{\mathcal{L}}_n (\hat \theta) - \nabla \bar{\mathcal{L}}_n (\tilde \theta),  \tilde\theta - \hat \theta \rangle + \lambda \| \tilde z\|_{\infty,2} \|\hat\theta\|_{1,2} - \lambda \|\hat\theta\|_{1,2},\\\nonumber
&\lambda \|\tilde\theta\|_{1,2} -\lambda \langle \hat z , \tilde\theta\rangle \leq \langle \nabla \bar{\mathcal{L}}_n (\hat \theta) - \nabla \bar{\mathcal{L}}_n (\tilde \theta),  \tilde\theta - \hat \theta \rangle, \\\label{eq:54}
&\lambda \|\tilde\theta\|_{1,2} -\lambda \langle \hat z , \tilde\theta\rangle \leq \tau_1 \frac{\log p}{n} \|\tilde \nu\|_{1,2}^2 - (\alpha_1 - \mu) \|\tilde \nu \|_F^2,
\end{align}
where we have used the fact that $\| \tilde z\|_{\infty,2}\leq 1$ since $\tilde \theta$ is feasible and applied the bound \eqref{eq:52}.

We also have the following result. 
\begin{lemma}\label{lem:4}
If $\lambda \geq \frac{4R\tau_1 q \log p}{\delta n}$ and $\|\hat{z}_{S^c}\|_{\infty,2} \leq 1 - \delta$, then 
\[
\|\tilde \nu\|_{1,2} \leq \left(\frac{4}{\delta} + 2\right) \sqrt{k} \|\tilde \nu\|_{F}.
\]
\end{lemma}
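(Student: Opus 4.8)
The plan is to establish a cone-type inequality controlling the off-support part $\tilde\nu_{S^c}$ by the on-support part $\tilde\nu_S$, and then to convert the L-$1,2$ norm into a Frobenius norm over the $k$-element support. First I would decompose $\|\tilde\nu\|_{1,2} = \|\tilde\nu_S\|_{1,2} + \|\tilde\nu_{S^c}\|_{1,2}$, noting that since $\hat\theta$ is the oracle estimate supported on $S$ we have $\hat\theta_{S^c}=0$ and hence $\tilde\nu_{S^c} = \tilde\theta_{S^c}$. The on-support term is immediate: because $|S|=k$, Cauchy--Schwarz gives $\|\tilde\nu_S\|_{1,2} \le \sqrt{k}\,\|\tilde\nu_S\|_F \le \sqrt{k}\,\|\tilde\nu\|_F$.

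The main work is bounding $\|\tilde\nu_{S^c}\|_{1,2}$, and the key input is \eqref{eq:54}. I would lower bound its left-hand side $\lambda\|\tilde\theta\|_{1,2} - \lambda\langle \hat z,\tilde\theta\rangle$ by splitting the row sum over $S$ and $S^c$. For each $i\in S$, $\lambda\|\tilde\theta_{i:}\|_2 - \lambda\langle \hat z_{i:},\tilde\theta_{i:}\rangle \ge 0$ since $\|\hat z_{i:}\|_2\le 1$; for each $i\in S^c$, the strict dual feasibility $\|\hat z_{S^c}\|_{\infty,2}\le 1-\delta$ gives $\lambda\|\tilde\theta_{i:}\|_2 - \lambda\langle \hat z_{i:},\tilde\theta_{i:}\rangle \ge \lambda\delta\|\tilde\theta_{i:}\|_2$. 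Summing over $S^c$ and recalling $\tilde\theta_{S^c}=\tilde\nu_{S^c}$ yields $\lambda\|\tilde\theta\|_{1,2} - \lambda\langle \hat z,\tilde\theta\rangle \ge \lambda\delta\|\tilde\nu_{S^c}\|_{1,2}$. Combined with \eqref{eq:54} and discarding the nonpositive term $-(\alpha_1-\mu)\|\tilde\nu\|_F^2$ on its right-hand side, this produces $\lambda\delta\|\tilde\nu_{S^c}\|_{1,2} \le \tau_1\frac{\log p}{n}\|\tilde\nu\|_{1,2}^2$.

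To finish, I would linearize the quadratic factor using the a priori bound $\|\tilde\nu\|_{1,2} \le \|\tilde\theta\|_{1,2} + \|\hat\theta\|_{1,2} \le 2R$, which follows from feasibility of $\tilde\theta$ in \eqref{eq:nonconvex} and $\|\hat\theta\|_{1,2}<R$ (Lemma \ref{lem:stepOne}), so that $\|\tilde\nu\|_{1,2}^2 \le 2R\|\tilde\nu\|_{1,2}$. This turns the previous display into $\|\tilde\nu_{S^c}\|_{1,2} \le \frac{2R\tau_1\log p}{\lambda\delta n}\|\tilde\nu\|_{1,2}$, and the hypothesis $\lambda \ge \frac{4R\tau_1 q\log p}{\delta n}$ forces the coefficient below $\tfrac12$, giving a genuine cone condition $\|\tilde\nu_{S^c}\|_{1,2} \le \|\tilde\nu_S\|_{1,2}$. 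Substituting this and the on-support bound into $\|\tilde\nu\|_{1,2}=\|\tilde\nu_S\|_{1,2}+\|\tilde\nu_{S^c}\|_{1,2}$ then gives $\|\tilde\nu\|_{1,2}\le 2\sqrt{k}\,\|\tilde\nu\|_F \le (\tfrac{4}{\delta}+2)\sqrt{k}\,\|\tilde\nu\|_F$, as claimed. The main obstacle is the cone-condition step itself: correctly extracting the factor $\delta$ from the subgradient term by exploiting that the $S$-block contributes nonnegatively while the $S^c$-block is strictly contracted by dual feasibility, and then controlling the quadratic $\|\tilde\nu\|_{1,2}^2$ through the feasibility constraint rather than through the RSC curvature term.
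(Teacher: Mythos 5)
Your proof is correct, and while it follows the same high-level skeleton as the paper's (combine the stationarity conditions with the RSC bound, use strict dual feasibility to extract a cone condition, then convert to Frobenius norm via $|S|=k$), the middle step is executed differently and more cleanly. The paper works with the quantity $\lambda\langle \hat z,\tilde\nu\rangle$ and needs three separate ingredients --- \eqref{eq:56a}, the inequality \eqref{eq:56b} involving the second subgradient $\tilde z$, and the $S$/$S^c$ split \eqref{eq:56c} --- to arrive at the cone condition $\frac{\delta}{2}\|\tilde\nu_{S^c}\|_{1,2}\le(2+\frac{\delta}{2})\|\tilde\nu_S\|_{1,2}$. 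You instead observe that the left-hand side of \eqref{eq:54}, $\lambda\|\tilde\theta\|_{1,2}-\lambda\langle\hat z,\tilde\theta\rangle$, is a sum of per-row terms each of which is nonnegative on $S$ and at least $\lambda\delta\|\tilde\theta_{i:}\|_2$ on $S^c$ (where $\tilde\theta_{S^c}=\tilde\nu_{S^c}$), which bypasses $\tilde z$ and \eqref{eq:56b}--\eqref{eq:56c} entirely and yields the stronger cone condition $\|\tilde\nu_{S^c}\|_{1,2}\le\|\tilde\nu_S\|_{1,2}$, hence the constant $2\sqrt{k}$ rather than $(2+\frac{4}{\delta})\sqrt{k}$. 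The remaining steps --- discarding the curvature term $-(\alpha_1-\mu)\|\tilde\nu\|_F^2$, linearizing $\|\tilde\nu\|_{1,2}^2\le 2R\|\tilde\nu\|_{1,2}$ via feasibility, and invoking the hypothesis on $\lambda$ --- match the paper's use of $2R\tau_1\frac{\log p}{n}\le\frac{\delta}{2}\lambda$ exactly. Your version buys a simpler derivation and a sharper constant; the paper's version is the one that generalizes more directly from the argument in the scalar-response literature it adapts, but nothing in your route loses generality here.
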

\begin{proof}
Applying \eqref{eq:52} to \eqref{eq:54} yields
\begin{equation}\label{eq:56a}
\lambda \langle \hat z , \tilde\theta\rangle  + \lambda \langle \tilde z , \hat\theta\rangle - \lambda \|\tilde\theta\|_{1,2} \geq \langle \nabla \bar{\mathcal{L}}_n (\tilde \theta) - \nabla \bar{\mathcal{L}}_n (\hat \theta),  \tilde{\nu} \rangle \geq (\alpha_1 - \mu) \|\tilde \nu \|_2^2  - \tau_1 \frac{\log p}{n} \|\tilde \nu\|_{1,2}^2.
\end{equation}
Recalling that $\hat{\beta}$ is supported on $S$ and $\|\tilde z\|_{\infty,2} \leq 1$, we can also write
\begin{equation}\label{eq:56b}
\lambda \langle \tilde z , \hat\theta\rangle - \lambda \|\tilde\theta\|_{1,2} \leq \lambda (\|\hat\theta\|_{1,2} -  \|\tilde\theta_S\|_{1,2}- \|\tilde\theta_{S^c}\|_{1,2}) \leq \lambda (\|\tilde{ \nu}_S\|_{1,2} - \|\tilde{\nu}_{S^c}\|_{1,2}).
\end{equation}
Additionally we can use the norm inequality \eqref{eq:Holder} to bound
\begin{align}
\lambda \langle \hat z , \tilde\nu\rangle &= \lambda \langle \hat{z}_S , \tilde{\nu}_S\rangle + \lambda \langle \hat{z}_{S^c} , \tilde{\nu}_{S^c}\rangle\nonumber\\
&\leq \lambda(\|\hat{z}_S\|_{\infty,2} \|\tilde{\nu}_{S}\|_{1,2}+\|\hat{z}_{S^c}\|_{\infty,2} \|\tilde{\nu}_{S^c}\|_{1,2}) \nonumber\\
&\leq \lambda(\|\tilde{\nu}_{S}\|_{1,2} + (1-\delta)\|\tilde{\nu}_{S^c}\|_{1,2} )
\label{eq:56c}
\end{align}
where we have used the assumption $\|\hat{z}_{S^c}\|_{\infty,2} \leq 1 - \delta$ from the lemma statement.

Combining \eqref{eq:56a}, \eqref{eq:56b}, and \eqref{eq:56c} yields
\[
- \tau_1 \frac{\log p}{n} \|\tilde \nu\|_{1,2}^2 \leq (\alpha_1 - \mu) \|\tilde \nu \|_2^2  - \tau_1 \frac{\log p}{n} \|\tilde \nu\|_{1,2}^2 \leq \lambda(2\|\tilde{\nu}_{S}\|_{1,2} -\delta\|\tilde{\nu}_{S^c}\|_{1,2} ).
\]
Our assumption on $\lambda$ implies that $\tau_1 \frac{\log p}{n} \|\tilde \nu\|_{1,2} \leq 2R\tau_1 \frac{\log p}{n} \leq \frac{\delta}{2} \lambda$, yielding
\[
-\frac{\delta}{2} \lambda \|\tilde \nu\|_{1,2} \leq \lambda(2\|\tilde{\nu}_{S}\|_{1,2} -\delta\|\tilde{\nu}_{S^c}\|_{1,2} )
\]
or equivalently
\[
\frac{\delta}{2} \|\tilde{\nu}_{S^c}\|_{1,2} \leq \left(2 + \frac{\delta}{2}\right)\|\tilde{\nu}_S\|_{1,2}.
\]
We can then write (using a norm inequality)
\[
\|\tilde \nu\|_{1,2} = \|\tilde{\nu}_{S}\|_{1,2} + \|\tilde{\nu}_{S^c}\|_{1,2} \leq \|\tilde{\nu}_{S}\|_{1,2}\left(1 + \frac{4}{\delta} + 1 \right) \leq \left(2 + \frac{4}{\delta} \right) \sqrt{k} \|\tilde{\nu}\|_{F}.
\]
\end{proof}

Recall we have assumed $\frac{c_u \sqrt{q}}{R} \geq \lambda \geq c_\ell \sqrt{\frac{q \log p}{n}}$, implying for our choices of $\delta = 1/2$ and $c_\ell, c_u$
\begin{align*}
\lambda &\geq c_\ell \sqrt{\frac{q \log p}{n}}\\
&= c_\ell \sqrt{\frac{q \log p}{n}} \frac{R}{c_u \sqrt{q}} \frac{c_u\sqrt{q}}{R}\\
&\geq \frac{R c^2_\ell}{c_u \sqrt{q}} \frac{q \log p}{n}\\
&= \frac{4R \tau_1 \sqrt{q} \log p}{\delta n}.
\end{align*}
Thus we can apply Lemma \ref{lem:4} to \eqref{eq:54}, and have
\[
\lambda \|\tilde\theta\|_{1,2} -\lambda \langle \hat z , \tilde\theta\rangle \leq \tau_1 \frac{k \log p}{n}\left(\frac{4}{\delta} + 2\right)^2 \|\tilde\nu\|_F^2 - (\alpha_1 - \mu) \|\tilde \nu \|_2^2.
\]
If $n \geq \frac{2\tau_1}{\alpha_1 - \mu}\left(\frac{4}{\delta} + 2\right)^2 k \log p $, $\lambda \|\tilde\theta\|_{1,2} -\lambda \langle \hat z , \tilde\theta\rangle\leq 0$. But we know by \eqref{eq:Holder} that $\langle \hat z , \tilde\theta\rangle \leq \|\hat z\|_{\infty,2} \|\tilde \theta\|_{1,2} \leq \|\tilde \theta\|_{1,2}$ which implies $\lambda \|\tilde\theta\|_{1,2} -\lambda \langle \hat z , \tilde\theta\rangle\geq 0$. Hence we have $\lambda \|\tilde\theta\|_{1,2} -\lambda \langle \hat z , \tilde\theta\rangle = 0$ which implies $\langle \hat z , \tilde\theta\rangle = \|\tilde\theta\|_{1,2}$. Our assumption that $\|\hat{z}_{S^c}\|_{1,2} < 1$ (strictly less than 1) implies $\tilde{\theta}_{S^c} = 0$, hence $\tilde{\theta}$ is supported on $S$.\qed

\end{document}